\def\RR{\ensuremath{\mathbb{R}}}
\newcommand{\CC}{{\mathbb C}}
\newcommand{\PP}{{\mathbb P}}
\newcommand{\rank}{\textup{rank}}
\newtheorem{theorem}{Theorem}
\numberwithin{theorem}{section} 
\numberwithin{equation}{section} 
\newtheorem{definition}[theorem]{Definition}
\newtheorem{lemma}[theorem]{Lemma}
\newtheorem{example}[theorem]{Example}
\newtheorem{corollary}[theorem]{Corollary}
\newtheorem{question}[theorem]{Question}
\newcommand{\FF}{{\mathbb{F}}}
\newcommand{\R}{{\mathbb{R}}}
\newcommand{\wt}{\widetilde}
\newcommand{\f}{\frac}
\newcommand{\lt}{\left}
\newcommand{\rt}{\right}
\newcommand{\degree}{\text{degree}}
\title{On the Existence of Epipolar Matrices}
\author{Sameer Agarwal}
\address{Google Inc. }
\email{sameeragarwal@google.com}
\author{Hon-Leung Lee}
\address{Department of Mathematics\\ University of Washington\\ Seattle, WA 98195}
\email{hllee@uw.edu}
\author{Bernd Sturmfels}
\address{Department of Mathematics\\  University of California\\ Berkeley, CA 94720}
\email{bernd@berkeley.edu}
\author{Rekha R. Thomas}
\address{Department of Mathematics\\ University of Washington\\ Seattle, WA 98195}
\email{rrthomas@uw.edu}
\thanks{Lee and Thomas were partially supported by NSF grant DMS-1418728, and Sturmfels by NSF grant DMS-1419018.} 
\begin{document}
\maketitle

\begin{abstract}
This paper considers the foundational question of the existence of a fundamental (resp. essential) 
matrix given $m$ point correspondences in two views. 
We present a complete answer for the existence of fundamental matrices for any value of $m$. Using examples we disprove the widely held beliefs that 
fundamental matrices always exist whenever $m \leq 7$. At the same time, we prove that they exist 
unconditionally when $m \leq 5$. Under a mild genericity condition, we show that an essential matrix always exists when $m \leq 4$. We also characterize the six and seven point configurations in two views for which all matrices satisfying the epipolar constraint have rank at most one.

\end{abstract}

\section{Introduction}
\label{sec:introduction}
A set of point correspondences $\{ (x_i, y_i) \in \RR^2 \times \RR^2, \,\,i=1,\ldots, m \}$
are the images of $m$ points in $\R^3$ in two uncalibrated (resp.~calibrated) cameras
   only if there exists a fundamental matrix $F$
   (resp.~essential matrix $E$)  such that the $(x_i,y_i)$
           satisfy the {\em epipolar constraints}~\cite[Chapter 9]{hartley-zisserman-2003}. Under mild genericity conditions on the point correspondences, the existence of these matrices is also sufficient for
the correspondences $(x_i,y_i)$ to be the images of a 3D scene
\cite{faugeras-92,hartley-uncalibrated-relative-92,hartley-uncalibrated-stereo-92,longuethiggins81}. This brings us to the following basic question in multiview geometry:
\begin{question}
\label{question1}
Given a set of $m$ point correspondences $(x_i,y_i) \in  \RR^2 \times \RR^2$,
when does there exist a fundamental (essential) matrix relating them via the epipolar constraints?
\end{question}

The answer to this question is known in several special cases~\cite{matas-et-al-2005,hartley-six-point}, but even in the 
minimally constrained and 
under-constrained cases ($m \le 7$ for fundamental matrices and $m\le5$ for essential matrices) our knowledge is incomplete.

For instance, in the uncalibrated case, for $m \le 7$, the popular statement of the so called {\em seven point algorithm} will have you believe that there always exists a fundamental matrix~\cite{hartley1994projective,sturm1869problem}. We will show that this is not true. The problem is, that the matrix returned by the seven point algorithm is only guaranteed to be rank deficient, it is not guaranteed to have rank two. 

In the calibrated case, when $m=5$, there exists up to 10 distinct complex essential matrices~\cite{demazure}, but it is not known when we can be sure that one of them is real. Similarly, 
it is unknown whether there always exists a real essential matrix for $m \le 4$ point correspondences. 

The common mistake in many of the usual existence arguments is the reliance on dimension counting. This unfortunately works only on algebraically closed fields, which the field of real numbers is not.

In this paper we give a complete answer to the existence question for fundamental matrices for all $m$ and for essential matrices for $m \le 4$. The problem of checking the existence of a fundamental (resp. essential) matrix for an arbitrary value of $m$ reduces to 
one where $m \leq 9$. The situations of $m=8,9$ are easy and thus the work needed is for $m \leq 7$. We prove the following results:
\begin{enumerate}[(1)]
\item For $m \le 5$ there always exists a fundamental matrix.\label{result1}  
\item For $m =6,7$ there may not exist a fundamental matrix, and we will provide an exact test for checking for its existence.\label{result2}
\item For $m \le 4$ there always exists an essential matrix. \label{result3}
\end{enumerate}

It is relatively easy to prove the existence of a fundamental matrix when $m \leq 4$. We give a much more sophisticated proof 
to extend this result to $m \leq 5$ in ~\eqref{result1}. Similarly, it is elementary to see that there is always an essential matrix when $m \le 3$. The proof of~\eqref{result3} is much more complicated.

A fundamental matrix can fail to exist in several ways. An important such case is when all matrices that run for competition have rank at most one. We fully characterize this phenomenon directly in 
terms of the geometry of the input point correspondences.

The key technical task in all this is to establish conditions for the existence 
of a real point in the intersection of a subspace and a fixed set of $3 \times 3$ matrices. 

In the remainder of this section we establish our notation and some basic facts about cameras, epipolar matrices, projective varieties and linear algebra. Section~\ref{sec:fundamental} considers the existence problem for the fundamental matrix and Section~\ref{sec:essential} does so for the essential matrix. We conclude in Section~\ref{sec:discussion} with a discussion of the results and directions for future work.

\subsection{Notation}

\label{sec:notation}
Capital roman letters (say $E, F, X, Y, Z$) denote matrices. For a matrix $F$, the corresponding lower case letter
$f$ denotes the vector obtained by concatenating the rows of $F$. Upper case calligraphic letters denote sets of matrices (say $\mathcal{E}, \mathcal{F}$).

For a field $\FF$ such as $\RR \textup{ or } \CC$, the projective space $\PP^n_\FF$ is $\FF^{n+1} \setminus \{0\}$ in which we identify $u$ and $v$ if $u = \lambda v$ for some $\lambda\in \FF\setminus\{0\}$. For example $(1,2,3)$ and $(4,8,12)$ are the same
point   in $\PP^2_\RR$, denoted as $(1,2,3) \sim (4,8,12)$.
The set of $m\times n$ matrices with entries in $\FF$ is denoted by $\FF^{m \times n}$, and by 
$\PP_\FF^{m\times n}$ if the matrices are only up to scale.
For $v \in \RR^3$,
$$
[v]_\times \,\,:= \,\,
\begin{pmatrix}
0 & -v_3 & v_2\\
v_3 & 0 & -v_1\\
-v_2 & v_1 & 0
\end{pmatrix}
$$
is a skew-symmetric matrix whose rank is two unless $v=0$. Also, $[v]_\times w = v \times w$, where $\times$ denotes the vector cross product.
For $A\in \FF^{m\times n}$, we have ${\rm ker}_\FF (A) = \{u \in \FF^n: A u = 0\}$,
and ${\rm rank}(A) =  n - \dim({\rm ker}_\FF (A))$. We use $\det(A)$ to denote the determinant of $A$.
Points $x_i$ and $ y_i$  in $\FF^2$ will be identified with their homogenizations
$(x_{i1},x_{i2},1)^\top$ and $(y_{i1},y_{i2},1)^\top $ in $ \PP_\FF^2$.
Also, 
$y_i^\top \otimes x_i^\top := \begin{pmatrix}
y_{i1}x_{i1} & y_{i1}x_{i2} & y_{i1} & y_{i2}x_{i1} & y_{i2}x_{i2} & y_{i2} & x_{i1} & x_{i2} & 1\end{pmatrix}\in \FF^{1\times 9}$.

If $P$ and $Q$ are finite dimensional subspaces, then $P \otimes Q$ is the span of the pairwise Kronecker products of the basis elements  of $P$ and $Q$.

\subsection{Linear algebra}

Below we list five facts from linear algebra that will be helpful in this paper.

\begin{lemma} \textup{\cite[pp. 399]{SempleKneebone}}\label{lem:non-collinear}
If $x_0,\ldots,x_{n+1}$
and $y_0,\ldots,y_{n+1}$ are two sets of $n+2$ points in 
 $\RR^{n+1}$ such that no $n+1$ points in either set are linearly dependent. 
 Then there is an invertible matrix $H\in \RR^{(n+1) \times (n+1)}$ such that 
$$
H x_i \sim y_i \ \text{ for any } i = 0,1,\ldots,n+1.
$$
\end{lemma}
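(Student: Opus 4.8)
The plan is to reduce the statement to elementary linear algebra in $\RR^{n+1}$ by exploiting the fact that any $n+1$ of the $x_i$ (and any $n+1$ of the $y_i$) form a basis. First I would single out $x_0,\ldots,x_n$; by hypothesis these $n+1$ vectors are linearly independent and hence a basis of $\RR^{n+1}$, so there are unique scalars $c_0,\ldots,c_n$ with $x_{n+1}=\sum_{i=0}^n c_i x_i$. The crucial observation is that every $c_j$ is nonzero: if $c_j=0$ then $x_{n+1}$ lies in the span of the $n$ vectors $\{x_i : 0\le i\le n,\ i\ne j\}$, so these $n$ vectors together with $x_{n+1}$ form a linearly dependent set of $n+1$ points from the first set, contradicting the hypothesis. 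Applying the same argument to the second set produces nonzero scalars $d_0,\ldots,d_n$ with $y_{n+1}=\sum_{i=0}^n d_i y_i$.

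Next I would construct $H$ directly. Since each $c_i\ne 0$, the rescaled vectors $c_0 x_0,\ldots,c_n x_n$ still form a basis of $\RR^{n+1}$, and likewise $d_0 y_0,\ldots,d_n y_n$ form a basis. Define $H$ to be the unique linear map sending $c_i x_i\mapsto d_i y_i$ for $i=0,\ldots,n$; since it carries a basis to a basis, $H$ is invertible. Then for $i=0,\ldots,n$ we get $H x_i=(d_i/c_i)\,y_i\sim y_i$, and by linearity $H x_{n+1}=H\big(\sum_{i=0}^n c_i x_i\big)=\sum_{i=0}^n d_i y_i=y_{n+1}$, so $H x_i\sim y_i$ holds for all $i=0,1,\ldots,n+1$, as required.

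I expect the only real subtlety to be the nonvanishing of the coefficients $c_j$ and $d_j$: this is precisely where the genericity hypothesis (no $n+1$ points linearly dependent) enters, and without it the map $H$ built above could send one of the $x_i$ to the origin, destroying invertibility or the relation $Hx_i\sim y_i$. Everything else is routine bookkeeping about choosing affine representatives of points in $\PP^n_\RR$ together with the fact that a linear map may be prescribed freely on a basis; no nontrivial computation is needed.
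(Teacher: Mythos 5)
Your proof is correct and complete: the key point, that all coefficients $c_j$ (and $d_j$) are nonzero, is exactly where the hypothesis that no $n+1$ of the points are linearly dependent is used, and the rest (prescribing $H$ on the rescaled basis $c_ix_i\mapsto d_iy_i$) is routine. Note that the paper itself gives no proof of this lemma; it simply cites Semple--Kneebone, and your argument is precisely the classical one for this fact about projective frames, so there is nothing to compare beyond observing that you have reproduced the standard reference proof.
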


\begin{lemma}{\rm\cite[Theorem 3]{meshulam}} \label{lem:meshulam}
Suppose $V$ is a linear subspace of $\RR^{n\times n}$ of dimension $rn$, such that 
for any $A\in V$, $\rank(A)\leq r$.
 Then either 
$V = W\otimes \RR^n$ or $V=\RR^n\otimes W$, for some $r$-dimensional subspace $W\subseteq \RR^n$.
\end{lemma}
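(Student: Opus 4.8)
The plan is to prove, by induction on $r$, a statement a little stronger than the one asked for: the \emph{Flanders bound} together with its equality case. Explicitly, if $V \subseteq \RR^{n \times n}$ is a subspace all of whose members have rank at most $r$, then $\dim V \le rn$, and if $\dim V = rn$ then $V$ is a \emph{compression space}, i.e. $V = W \otimes \RR^n$ or $V = \RR^n \otimes W$ for some $r$-dimensional $W \subseteq \RR^n$. The base case $r = 1$ is the classical dichotomy for linear spaces of matrices of rank at most one: all members share a common column line, or all share a common row line. (If $x v^\top$ and $y w^\top$ both lie in the space with $x, y$ linearly independent and $v, w$ linearly independent, then $x v^\top + y w^\top$ has rank two; a short case analysis upgrades this pairwise observation to a global statement.) A space in which every member has column space inside a fixed line $\RR x_0$ is contained in $\{ x_0 v^\top : v \in \RR^n \} = \RR x_0 \otimes \RR^n$, which has dimension $n$, so in the extremal case it equals this compression space; the row case is symmetric.

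For the inductive step, note first that if no member of $V$ has rank exactly $r$ then all ranks are at most $r - 1$ and the inductive hypothesis gives $\dim V \le (r-1)n < rn$. So assume $A_0 \in V$ has rank $r$. Acting by $\mathrm{GL}_n(\RR)$ on the left and on the right — an operation preserving ranks, dimension, and the two compression-space shapes — we may take $A_0 = \left( \begin{smallmatrix} I_r & 0 \\ 0 & 0 \end{smallmatrix} \right)$, blocked as $(r, n-r)$. For any $B \in V$, blocked the same way as $\left( \begin{smallmatrix} B_1 & B_2 \\ B_3 & B_4 \end{smallmatrix} \right)$, the matrix $A_0 + tB$ has rank at most $r$ for all real $t$; for small $t$ the block $I_r + t B_1$ is invertible, so the Schur complement identity gives $\rank(A_0 + tB) = r + \rank\!\big( t B_4 - t^2 B_3 (I_r + t B_1)^{-1} B_2 \big)$. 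Hence the pencil $t B_4 - t^2 B_3 (I_r + t B_1)^{-1} B_2$ vanishes identically near $t = 0$, and expanding $(I_r + t B_1)^{-1}$ as a power series and matching coefficients yields, for every $B \in V$, that $B_4 = 0$, $B_3 B_2 = 0$, $B_3 B_1 B_2 = 0$, and so on, as well as (by polarization) $B_3 C_2 + C_3 B_2 = 0$ for all $B, C \in V$. In particular $V$ lies in the space of matrices whose lower-right $(n-r) \times (n-r)$ block is zero.

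Now split into cases according to whether the block families $\{ B_3 : B \in V \}$ and $\{ B_2 : B \in V \}$ vanish. If all $B_3 = 0$, then $V \subseteq \big\{ \left( \begin{smallmatrix} * & * \\ 0 & 0 \end{smallmatrix} \right) \big\} = W \otimes \RR^n$ with $W = \RR^r \times 0$, a space of dimension exactly $rn$; this gives the bound, and $\dim V = rn$ forces $V$ to equal this compression space. The case $B_2 \equiv 0$ is symmetric and gives $\RR^n \otimes W$. If neither family vanishes, choose $B \in V$ with $\rank(B_3) = s \ge 1$; then the relations $B_3 B_2 = 0$, $B_3 B_1 B_2 = 0$, and the rank bound on $B + C$ for $C \in V$ confine the top-right blocks to an $(r - s)$-dimensional column condition and constrain the top-left blocks on $\ker(B_3)$. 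After a further change of basis this exhibits $V$, up to a summand of controlled dimension, as built from a space of matrices with the same rank bound but in a strictly smaller ambient size, to which the inductive hypothesis applies; the resulting dimension count shows $\dim V < rn$, so the extremal case cannot occur in this branch.

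The crux, and the step I expect to be the main obstacle, is precisely this mixed branch: turning the bilinear identities $B_3 C_2 + C_3 B_2 = 0$ and the per-element identities $B_3 B_1^k B_2 = 0$ into a genuine recursion requires choosing the correct subspace to factor out and then verifying that extremality ($\dim V = rn$) would propagate to the reduced problem in the smaller ambient space — equivalently, ruling out that the column-type and row-type constraints coexist. Everything else is either the classical rank-one input or routine linear algebra (Schur complements, dimension counting), and the fact that the field is $\RR$ rather than algebraically closed causes no difficulty, since the pencil argument uses only that $\RR$ is infinite.
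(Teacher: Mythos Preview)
The paper does not prove this lemma at all: it is simply quoted from Meshulam's paper \cite{meshulam} and used as a black box (in the proofs of Theorem~\ref{thm:m=6 and kernal in rank one} and Theorem~\ref{thm:F exists for m<=5}). So there is no ``paper's own proof'' to compare against; any correct argument you supply goes strictly beyond what the paper does.

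As for your proposal itself, the architecture is the standard Flanders/Atkinson--Lloyd route and is sound through the normalization of $A_0$, the Schur-complement pencil argument yielding $B_4=0$ and the bilinear relations, and the two ``pure'' branches $B_3\equiv 0$ or $B_2\equiv 0$. The gap is exactly where you flag it: the mixed branch. Saying that a further change of basis ``exhibits $V$, up to a summand of controlled dimension, as built from a space of matrices with the same rank bound but in a strictly smaller ambient size'' is not yet a proof. Concretely, once you pick $B$ with $\rank(B_3)=s\ge 1$ and normalize so that $B_3=\left(\begin{smallmatrix} I_s & 0\\ 0 & 0\end{smallmatrix}\right)$, you must (i) show that for \emph{every} $C\in V$ the block $C_2$ has its top $s$ rows determined by $C_3$ via the polarized identity $B_3 C_2 + C_3 B_2 = 0$, (ii) control the $B_1$-block contribution using the higher identities $B_3 B_1^k B_2=0$, and (iii) assemble these into an honest dimension inequality $\dim V \le r^2 + (r-s)(n-r) + s(n-r)$ or the like, and then check that this is strictly below $rn$ when $1\le s\le r$ and $n>r$. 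None of these steps is automatic, and the polarization step in particular needs the identity for \emph{all} pairs $(B,C)$, not just those involving your chosen $B$. Until that bookkeeping is written out, the extremal conclusion in the mixed branch is asserted rather than proved.
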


\begin{lemma}{\rm\cite[Theorem 1]{flanders}} \label{lem:flanders}
Suppose $V$ is a linear subspace of $\RR^{m\times n}$ and $r$ is 
the maximum rank of an element of $V$. Then $\dim(V)\leq r \cdot \max \{m,n\}$.
\end{lemma}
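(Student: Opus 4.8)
\emph{Proof plan.} The goal is a self-contained argument for the bound $\dim(V) \le r\cdot\max\{m,n\}$, and no induction on $r$ will be needed. Since transposing $V$ changes neither its dimension nor the maximal rank of its elements, I would first reduce to the case $m \le n$, so that $r \le m$ and the target inequality becomes $\dim(V) \le rn$; the cases $r = 0$ (where $V = 0$) and $r = m$ (where $\dim V \le mn = rn$) are immediate, so one may as well assume $1 \le r \le m \le n$. Now pick $A \in V$ of rank exactly $r$. Multiplying $V$ on the left and right by suitable invertible matrices — an operation preserving both the hypothesis and $\dim(V)$ — one arranges $A = \begin{pmatrix} I_r & 0 \\ 0 & 0 \end{pmatrix}$, in block form with row blocks of sizes $r$ and $m-r$ and column blocks of sizes $r$ and $n-r$. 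Write a general $B \in V$ accordingly as $\begin{pmatrix} P_B & Q_B \\ R_B & S_B \end{pmatrix}$.

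The first key step is a Schur-complement perturbation argument showing that the block structure of $V$ is severely constrained. For $t \neq 0$ of small absolute value the top-left block $I_r + tP_B$ of $A + tB \in V$ is invertible, so $\rank(A + tB) = r + \rank\bigl(tS_B - t^2 R_B (I_r + tP_B)^{-1} Q_B\bigr)$; since this must be at most $r$, we get $S_B = t R_B (I_r + tP_B)^{-1} Q_B$ for all such $t$, and letting $t \to 0$ forces $S_B = 0$. Thus every element of $V$ has zero bottom-right block. Next, let $\tau : V \to \RR^{r\times r}$ be the linear map $B \mapsto P_B$ and apply the same rank identity to the element $A + B$ for $B \in \ker\tau$: here the top-left block is exactly $I_r$, so no perturbation is needed and $\rank(A + B) = r + \rank(R_B Q_B)$, whence $R_B Q_B = 0$ for every $B \in \ker\tau$. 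Running this for $B + B' \in \ker\tau$ and subtracting the two ``diagonal'' vanishings gives the bilinear identities $R_B Q_{B'} + R_{B'} Q_B = 0$ for all $B, B' \in \ker\tau$.

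Then I would decompose $V$ through $\tau$: since $\tau(V) \subseteq \RR^{r\times r}$ we have $\dim V = \dim\tau(V) + \dim\ker\tau \le r^2 + \dim\ker\tau$, and because $S_B = 0$ identically, an element of $\ker\tau$ is determined by the pair $(Q_B, R_B)$, so $\ker\tau$ is isomorphic to a subspace $L \subseteq \RR^{r\times(n-r)} \oplus \RR^{(m-r)\times r}$ satisfying $RQ' + R'Q = 0$ for all $(Q,R),(Q',R') \in L$. It remains to prove $\dim L \le r(n-r)$. Projecting $L$ onto its first coordinate, with image $\ol{Q} \subseteq \RR^{r\times(n-r)}$ and kernel $K = \{(0,R) \in L\}$, the bilinear identity applied to $(0,R) \in K$ and $(Q',R') \in L$ gives $RQ' = 0$. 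Let $\Sigma \subseteq \RR^r$ be the sum of the column spaces of the matrices in $\ol{Q}$. Then every $Q' \in \ol{Q}$ maps $\RR^{n-r}$ into $\Sigma$, so $\dim\ol{Q} \le (n-r)\dim\Sigma$, while every $R$ with $(0,R) \in K$ annihilates $\Sigma$, so $\dim K \le (m-r)(r - \dim\Sigma)$. Adding these and using $n - r \ge m - r \ge 0$ together with $\dim\Sigma \le r$ yields $\dim L = \dim\ol{Q} + \dim K \le (n-r)\dim\Sigma + (m-r)(r - \dim\Sigma) \le (n-r)r$. Therefore $\dim V \le r^2 + r(n-r) = rn$, as desired.

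I expect the block bookkeeping to carry the real weight of the proof. The crude decomposition that ignores the bilinear identities gives only $\dim V \le r^2 + (m-r)r + r(n-r)$, which overshoots the target by $r(m-r)$; recovering this slack is precisely what forces one to exploit the ``isotropy'' relations $R_B Q_{B'} + R_{B'} Q_B = 0$, via the auxiliary subspace $\Sigma$. Getting the directions of the inequalities right — in particular the role of the assumption $m \le n$ in the final estimate for $\dim L$ — and correctly identifying which blocks are forced to vanish are the points most prone to slips.
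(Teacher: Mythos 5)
Your proof is correct, and there is nothing in the paper to compare it against step by step: the paper does not prove this lemma, it simply cites it as Theorem~1 of Flanders, so your argument serves as a self-contained replacement for that citation. I verified the key points. The reduction to $m\le n$ and to $A=\begin{pmatrix} I_r & 0\\ 0 & 0\end{pmatrix}$ is legitimate, since transposition and left/right multiplication by invertible matrices preserve $\dim(V)$ and the ranks of all elements. The Schur-complement identity $\rank(A+tB)=r+\rank\bigl(tS_B-t^2R_B(I_r+tP_B)^{-1}Q_B\bigr)$ for small $t\neq 0$, together with the hypothesis $\rank(A+tB)\le r$, does force $S_B=tR_B(I_r+tP_B)^{-1}Q_B$ and hence $S_B=0$ in the limit $t\to 0$; the same identity at $t=1$ on $\ker\tau$ gives $R_BQ_B=0$, and polarization yields the relations $R_BQ_{B'}+R_{B'}Q_B=0$, of which you only need the case $B\in K$, $B'\in L$, giving $RQ'=0$. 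The final count is where the content lies, and it is right: with $s=\dim\Sigma$ one gets $\dim L\le (n-r)s+(m-r)(r-s)\le (n-r)r$, the last step using exactly $s\le r$ and $m\le n$, so $\dim V\le r^2+r(n-r)=rn$; the degenerate cases $r=0$, $r=m$ are disposed of separately as you say. This is essentially the modern streamlined proof of Flanders' bound (block normal form, a perturbation/Schur-complement step, rank--nullity through $\tau$, and an isotropy-type estimate); Flanders' original treatment is more computational and also pins down the equality case, which is not needed for the paper's application (there only the instance $m=n=3$, $r=1$ is used, in the proof of Theorem~\ref{thm:F exists for m<=5}).
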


\begin{lemma}[Matrix Determinant Lemma]{\rm\cite[Theorem 18.1.1]{harville}}\label{lem:matrix determinant} 
If $A\in \RR^{n\times n}$ is invertible and $u,v\in \RR^n$, then 
$
\det(A+uv^\top) = (1+v^\top A^{-1}u) \det(A).
$
\end{lemma}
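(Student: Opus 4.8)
The plan is to reduce to the case $A = I$ and then exploit the fact that $uv^\top$ has rank at most one. Since $A$ is invertible we may factor $A + uv^\top = A\,(I + A^{-1}u\,v^\top)$, so that $\det(A + uv^\top) = \det(A)\cdot\det(I + w v^\top)$ with $w := A^{-1}u$. It therefore suffices to prove the normalized identity $\det(I + w v^\top) = 1 + v^\top w$ for arbitrary $v, w \in \RR^n$, and then substitute $w = A^{-1}u$ at the end.

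For the normalized identity I would argue via the characteristic polynomial of the rank-one matrix $wv^\top$. If $v = 0$ or $w = 0$ the claim is trivial. Otherwise $v^\top$ vanishes on an $(n-1)$-dimensional subspace, so $wv^\top$ kills that subspace and hence $0$ is an eigenvalue of (geometric, and one checks algebraic) multiplicity $n-1$; the remaining eigenvalue is the trace, $\operatorname{tr}(wv^\top) = v^\top w$. Thus the eigenvalues of $I + wv^\top$ are $1$ with multiplicity $n-1$ and $1 + v^\top w$ once, and $\det(I + wv^\top)$, being the product of the eigenvalues, equals $1 + v^\top w$. The one place needing a line of care is confirming that $0$ occurs with full algebraic multiplicity $n-1$ (the matrix $wv^\top$ need not be diagonalizable), which follows directly by computing $\det(\lambda I - wv^\top) = \lambda^{n-1}(\lambda - v^\top w)$.

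An alternative route, which sidesteps eigenvalue bookkeeping entirely, is a Schur-complement computation on the $(n+1)\times(n+1)$ block matrix
\[
M \;=\; \begin{pmatrix} A & -u \\ v^\top & 1 \end{pmatrix}.
\]
Using invertibility of the $(1,1)$-block $A$, the block determinant formula gives $\det M = \det(A)\cdot\bigl(1 + v^\top A^{-1} u\bigr)$; using instead the (invertible) $(2,2)$-block, it gives $\det M = \det\bigl(A + u v^\top\bigr)$. Equating the two expressions yields the lemma immediately.

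I do not expect a genuine obstacle here: this is a classical determinant identity, and neither approach requires a genericity hypothesis or a limiting argument. In particular the identity holds verbatim in the degenerate case $v^\top A^{-1} u = -1$, where both sides vanish (the matrix $A + uv^\top$ is then singular), so no case distinction is needed. I would present the eigenvalue argument as the main proof and remark on the block-matrix derivation as a quick alternative.
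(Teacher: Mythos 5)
Your proposal is correct. Note that the paper does not prove this statement at all: it is listed among the background facts from linear algebra and cited directly to Harville \cite[Theorem 18.1.1]{harville}, so there is no in-paper argument to compare against. Both of your routes are sound self-contained proofs. The reduction $\det(A+uv^\top)=\det(A)\det(I+wv^\top)$ with $w=A^{-1}u$, followed by the characteristic-polynomial computation $\det(\lambda I - wv^\top)=\lambda^{n-1}(\lambda - v^\top w)$, correctly handles the non-diagonalizable and $v^\top w=0$ cases, and the Schur-complement computation on the bordered matrix $\bigl(\begin{smallmatrix} A & -u\\ v^\top & 1\end{smallmatrix}\bigr)$ is an even quicker standard derivation. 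Either one would serve as a complete proof; no gap.
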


In the following lemma we identify points in $\RR^2$ with their homogenizations in 
$\PP_\RR^2$ as mentioned earlier. The proof of the lemma is in Appendix~\ref{app:homography}.

\begin{lemma} \label{lem:homography}
Given two lines $l,m$ in $\RR^2$, and $x_0\in l$, $y_0\in m$,  there is an invertible matrix $H\in \RR^{3\times 3}$ such that 
\begin{enumerate}[(1)]
\item \label{lem:homography1}
$Hx_0 = y_0$; and 
\item \label{lem:homography2}
for any $x\in l$, $Hx\in m$.
\end{enumerate}
\end{lemma}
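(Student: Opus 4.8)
The plan is to build $H$ by hand as a change of basis in $\RR^3$. Work in $\PP_\RR^2$ with homogeneous coordinates $(z_1,z_2,z_3)$, so a point $x=(x_1,x_2)\in\RR^2$ is identified with the span of $\tilde x:=(x_1,x_2,1)\in\RR^3$. First I would record the elementary fact that the homogenization of the affine line $l$ spans a two-dimensional subspace $L\subseteq\RR^3$: writing $d_l:=(d_1,d_2,0)$ where $(d_1,d_2)$ is a direction vector of $l$, every $x\in l$ satisfies $\tilde x=\tilde x_0+\beta d_l$ for a unique $\beta\in\RR$, so $L=\mathrm{span}(\tilde x_0,d_l)$; moreover $\tilde x_0$ and $d_l$ are linearly independent since their third coordinates are $1$ and $0$. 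Define $M=\mathrm{span}(\tilde y_0,d_m)$ in the same way, with $d_m$ a direction vector of $m$.

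Next I would extend $\{\tilde x_0,d_l\}$ and $\{\tilde y_0,d_m\}$ to bases $\{\tilde x_0,d_l,u\}$ and $\{\tilde y_0,d_m,w\}$ of $\RR^3$ (possible by the independence just noted) and let $H\in\RR^{3\times 3}$ be the unique linear map with $H\tilde x_0=\tilde y_0$, $Hd_l=d_m$, and $Hu=w$. Then $H$ sends a basis of $\RR^3$ to a basis, hence is invertible, and $H\tilde x_0=\tilde y_0$ is exactly part~\eqref{lem:homography1}. For part~\eqref{lem:homography2}, given $x\in l$ I would write $\tilde x=\tilde x_0+\beta d_l$, so that $H\tilde x=\tilde y_0+\beta d_m$; this vector has third coordinate $1$ and lies in $M$. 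Since $M\cap\{z_3=1\}$ is precisely the set of homogenizations of the points of the affine line $m$, the point $H\tilde x$ is the homogenization of a point of $m$, which is the claim.

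The lemma is easy, and the only step that genuinely needs care is the bookkeeping at infinity. An arbitrary invertible $H$ with $H\tilde x_0\sim\tilde y_0$ and $HL=M$ already sends $l$ into the projective closure of $m$; but if such an $H$ maps the point at infinity $d_l$ of $l$ to a \emph{finite} point of $m$, then some finite point of $l$ is forced onto the point at infinity of $m$ and thus not onto a point of $m$, violating~\eqref{lem:homography2}. Imposing $Hd_l=d_m$, as above, is exactly what prevents this. (Alternatively one could transport $L$ and $M$ to $\PP^1_\RR$ and apply Lemma~\ref{lem:non-collinear} with $n=1$, then lift the resulting $2\times2$ matrix to a $3\times3$ matrix; but the direct construction avoids this extra step.)
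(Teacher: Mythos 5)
Your proof is correct. The underlying idea is the same as the paper's -- send the base point to the base point and the direction of $l$ to the direction of $m$ -- but the construction is packaged differently. The paper picks an orthogonal $W\in\RR^{2\times 2}$ with $m-y_0=W(l-x_0)$ and sets $H=\left(\begin{smallmatrix} W & y_0-Wx_0\\ 0 & 1\end{smallmatrix}\right)$, i.e.\ it realizes $H$ as an affine (indeed Euclidean) motion of $\RR^2$ in homogenized form; with the last row $(0\ 0\ 1)$, finite points automatically go to finite points, so condition~\eqref{lem:homography2} is immediate and there is no bookkeeping at infinity to do. You instead define $H$ by transporting the basis $\{\tilde x_0,d_l,u\}$ to $\{\tilde y_0,d_m,w\}$, which is a general invertible (not necessarily affine) matrix; your computation $H(\tilde x_0+\beta d_l)=\tilde y_0+\beta d_m$ correctly shows the image has third coordinate $1$ and lies on the homogenized line $m$, and the explicit condition $Hd_l=d_m$ is exactly the right substitute for affineness, as your closing remark explains. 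Your version is marginally more flexible (it yields the larger family of projective maps carrying $l$ onto $m$ and fixing the prescribed point correspondence), while the paper's is more concrete; both are complete and elementary.
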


\subsection{Projective varieties}
\label{sec:projective}
We recall some basic notions from algebraic geometry~\cite{cox2007ideals,harris,shafarevich2013}. 
Let $\FF[u] = \FF[u_1,\ldots,u_n]$ denote the ring of all polynomials with coefficients in the field $\FF$.
\begin{definition}[Homogeneous Polynomial]
A polynomial in $\FF[u]$  is homogeneous (or called a {\em form}) if all its monomials have the same total degree.
\end{definition}
For example, $u_1^2 u_2 + u_1 u_2^2$ is a form of degree three but $u_1^3 + u_2$ is not a form.

\begin{definition}[Projective Variety and Subvariety]
A subset $\mathcal{V} \subseteq \PP_\FF^{n}$ is a projective variety if there are homogeneous polynomials $h_1,\ldots,h_t \in \FF[u]$ such that
$\mathcal{V} = \{u \in \PP^{n}_\FF: h_1(u) = \ldots = h_t(u) = 0\}$.  
A variety $\mathcal{V}_1$ is a subvariety of $\mathcal{V}$ if $\mathcal{V}_1 \subseteq \mathcal{V}$.
\end{definition}

Given homogeneous polynomials $h_1,\ldots,h_t \in \RR[u]$, let $\mathcal{V}_\CC := \{ u \in \PP_\CC^n \,:\, h_i(u)=0 \,\,\hbox{for} \,\,i=1,\ldots,t \}$ be their projective
variety over the complex numbers, and $\mathcal{V}_\RR := \mathcal{V}_\CC \cap \PP_\RR^n$ be the
set of real points in $\mathcal{V}_\CC$.

\begin{definition}[Irreducibility]
A projective variety $\mathcal{V}\subseteq \PP_\FF^n$  is irreducible if it is not the union of two nonempty proper subvarieties 
of $\PP_\FF^n$.
\end{definition}

We define the dimension of a projective variety over $\CC$ in a form that is particularly suitable to this paper. 

\begin{definition}[Dimension] {\rm\cite[Corollary 1.6]{shafarevich2013}}
The dimension $\dim (\mathcal{V})$ of a projective variety $\mathcal{V}\subseteq \PP_\CC^n$ is 
$d$ where $n-d-1$ is the maximum dimension of a linear subspace of $\PP_\CC^n$ disjoint from $\mathcal{V}$.
\end{definition}

As a special case, 
if $\mathcal{L}$ is a $l$-dimensional linear subspace in $\CC^{n+1}$, it can be viewed as an irreducible projective variety in $\PP_\CC^n$ 
of dimension $l-1$. 

The following result shows how dimension counting can be used to infer facts about the intersection of a variety and a linear subspace in  $\PP_\CC^n$. It is a consequence of the more general statement in \cite[Theorem 1.24]{shafarevich2013}. This result does not extend to varieties over $\RR$.

\begin{theorem} \label{thm:intersections}
Consider an irreducible projective variety $\mathcal{V}_\CC\subseteq \PP_n^\CC$ of dimension $d$  
and a linear subspace $\mathcal{L}\subseteq \PP_n^\CC$ of dimension $l$. If $d+l=n$ then $\mathcal{V}$ must intersect 
$\mathcal{L}$. If $d+l>n$ then $\mathcal{V}$ intersects $\mathcal{L}$ at infinitely many points.
\end{theorem}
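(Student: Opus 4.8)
The plan is to deduce this from the general projective dimension theorem, \cite[Theorem 1.24]{shafarevich2013}, which says that if $\mathcal{V}_\CC$ is irreducible of dimension $d$ in $\PP^n_\CC$ and $\mathcal{L}$ is any linear subspace of dimension $l$, then every irreducible component of $\mathcal{V}_\CC \cap \mathcal{L}$ has dimension at least $d + l - n$, provided the intersection is nonempty; moreover when $d + l \geq n$ the intersection is in fact nonempty. First I would recall this statement precisely and note that it is the projective (hence "no-escape-to-infinity") phenomenon that makes the conclusion unconditional, in contrast to the affine case. The first assertion, $d + l = n \Rightarrow \mathcal{V}_\CC \cap \mathcal{L} \neq \emptyset$, is then the nonemptiness half of that theorem in the boundary case. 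The second assertion requires slightly more: when $d + l > n$, nonemptiness again gives a nonempty intersection, and then the lower bound on the dimension of every component yields $\dim(\mathcal{V}_\CC \cap \mathcal{L}) \geq d + l - n \geq 1$; since a projective variety of dimension at least one over an infinite field has infinitely many points (e.g.\ by intersecting with a generic hyperplane and inducting down to dimension zero, or simply because a positive-dimensional variety cannot be finite), we conclude $\mathcal{V}_\CC \cap \mathcal{L}$ is infinite.

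The key steps, in order, are: (i) state the cited general theorem and isolate the two facts we need from it — nonemptiness when $d+l \geq n$, and the component-dimension bound $\geq d+l-n$; (ii) apply nonemptiness with $d + l = n$ to get the first claim; (iii) apply both facts with $d + l > n$ to get a component of dimension $\geq d + l - n \geq 1$; (iv) argue that a projective variety of positive dimension over $\CC$ is an infinite set, which finishes the second claim. Step (iv) can be handled by induction on the dimension using the definition of dimension given in the excerpt: a variety of dimension $\geq 1$ meets every hyperplane in $\PP^n_\CC$ (since a hyperplane has dimension $n-1$ and $1 + (n-1) = n$), and generic hyperplanes cut the dimension down by exactly one, so iterating produces infinitely many distinct points; alternatively one simply invokes that a finite set has dimension $0$.

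The only real subtlety — and the step I would be most careful about — is matching the hypotheses of \cite[Theorem 1.24]{shafarevich2013} to our setting: that reference typically phrases the result for intersections in projective space with the linear subspace replaced by a hypersurface or requires a transversality/genericity caveat for the exact-dimension statement, but the nonemptiness and lower-bound-on-components parts hold for an arbitrary linear subspace with no genericity assumption, which is exactly what we are using. I would therefore state explicitly that we only invoke the unconditional parts of that theorem (nonemptiness for $d + l \geq n$ and the component-dimension inequality), so no genericity hypothesis on $\mathcal{L}$ is needed; the irreducibility of $\mathcal{V}_\CC$ is the only hypothesis that matters on the variety side. Everything else is bookkeeping.
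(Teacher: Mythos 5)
Your proposal is correct and follows the same route as the paper, which likewise derives both claims from the projective dimension theorem \cite[Theorem 1.24]{shafarevich2013}: nonemptiness of $\mathcal{V}_\CC \cap \mathcal{L}$ when $d+l \geq n$, and the component-dimension bound giving a positive-dimensional (hence infinite) intersection when $d+l > n$. Your additional remark that a positive-dimensional projective variety over $\CC$ is infinite is exactly the small gap the paper leaves implicit, and your handling of it is fine.
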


Observe that the above theorem only applies over the complex numbers. As a simple illustration the curve $x^2 -y^2 +  z^2 = 0$ in $\PP_\CC^2$ is guaranteed to intersect the subspace $y=0$ in two complex points since they have complementary dimensions in $\PP_\CC^2$. 
 However, neither of these intersection points is real.
 
If $\mathcal{V}\subseteq \PP_\CC^n$ is a projective variety, then it intersects any linear subspace of dimension $n-\dim(\mathcal{V})$ in $\PP_\CC^n$. 
If the subspace is general, then the cardinality of this intersection is a constant which is an important invariant of the variety.
 
\begin{definition}[Degree]{\rm\cite[Definition 18.1]{harris}} \label{def:degree}
The  degree of a  projective variety $\mathcal{V} \subseteq \PP^{n}_\CC$, denoted by $\degree (\mathcal{V})$, is the number of intersection points with a general linear subspace of dimension $n- \dim(\mathcal{V})$  in $\PP_\CC^n$ .
\end{definition}

\subsection{Camera Matrices}
A general projective camera can be modeled by a matrix $P \in \PP_\RR^{3 \times 4}$ with $\rank(P)=3$. Partitioning a camera as $P=\begin{pmatrix} A & b \end{pmatrix}$ where $A \in
\RR^{3 \times 3}$, we say that $P$ is a {\em finite camera} if $A$ is
nonsingular. 
In this paper we restrict ourselves to finite cameras.

A finite camera $P$ can be written as $P = K\begin{pmatrix} R & t\end{pmatrix}$,
where $t \in \RR^3$,
 $K$ is an upper triangular matrix with positive diagonal entries known as the {\em calibration matrix}, and $R \in \textup{SO}(3)$ is a rotation matrix that represents the orientation of the camera coordinate frame.
    If the calibration matrix $K$ is known, then the camera is
    said to be {\em calibrated}, and otherwise the camera is {\em uncalibrated}. The {\em normalization} of a calibrated camera $P = K\begin{pmatrix} R & t\end{pmatrix}$ is the camera $K^{-1}P = \begin{pmatrix} R & t\end{pmatrix}$. 
    
By dehomogenizing
(i.e.~scaling the last coordinate to be $1$),
 we can view the image $x = Pw$ as a point in~$\RR^2$. If $x$ is the image of $w$ in the calibrated camera $P$, then $K^{-1}x$ is called the {\em normalized image} of $w$, or equivalently, it is the image of $w$ in the normalized camera $K^{-1}P$. This allows us to remove the effect of the calibration $K$ by passing to the normalized camera $K^{-1}P$ and  normalized images $\tilde{x} := K^{-1}x$.

\subsection{Epipolar Matrices}
\label{sec:epipolar}

 In this paper we use the name {\em epipolar matrix} to refer to either a {fundamental matrix} or {essential matrix}
derived from the {\em epipolar geometry} of a pair of cameras. These matrices are explained and studied in \cite[Chapter 9]{hartley-zisserman-2003}.

An {\em essential matrix} is any matrix in $\PP_\RR^{3 \times 3}$ of the form $E = SR$ where $S$ is a skew-symmetric matrix and $R \in \textup{SO}(3)$. Essential matrices are characterized by the property that they have rank two (and hence one zero singular value) and two equal non-zero singular values. An essential matrix depends on six parameters,
three each from $S$ and $R$, but since it is only defined up to scale, it has five degrees of freedom.

The essential matrix of the two normalized cameras $\begin{pmatrix} I & 0 \end{pmatrix}$ and 
$\begin{pmatrix} R & t \end{pmatrix}$ is $E = [t]_\times R$.
For every pair of normalized images $\tilde{x}$ and $\tilde{y}$ in these cameras of a point 
$w \in \PP_\RR^3$, the triple $(\tilde{x}, \tilde{y}, E)$ satisfies the {\em epipolar constraint}
\begin{align} \label{eq:epipolar for E}
\tilde{y}^\top E \tilde{x} \,\,=\,\, 0.
\end{align}
Further, any $E=SR$ is the essential matrix of a pair of cameras as shown in \cite[Section 9.6.2]{hartley-zisserman-2003}.

If the calibrations $K_1$ and $K_2$
of the two cameras were unknown, then for a pair of corresponding images $(x,y)$ in the two cameras, the epipolar constraint becomes
\begin{align} \label{eq:epipolar for F}
0 \,=\, \tilde{y}^\top E \tilde{x} \,=\, y^\top K_2^{-\top} E K_1^{-1} x.
\end{align}
The matrix $F := K_2^{-\top} E K_1^{-1}$
 is the {\em fundamental matrix} of the two uncalibrated cameras. This is a rank two matrix but its two non-zero singular values are no longer equal.
 Conversely, any real $3 \times 3$ matrix of rank two is the fundamental matrix of a pair of cameras \cite[Section 9.2]{hartley-zisserman-2003}. A fundamental matrix has seven degrees of freedom since it satisfies the rank two condition and is only defined up to scale.
The set of fundamental matrices can be parametrized as $F = [b]_\times H$, where $b$ is a non-zero vector and $H$ is an 
invertible matrix $3 \times 3$ matrix \cite[Section 9.6.2]{hartley-zisserman-2003}.

\subsection{$X,Y$ and $Z$}
\label{sec:xyz}
Suppose we are given  $m$ point correspondences (normalized or not) $\{ (x_i, y_i), i=1,\ldots,m\}
\subseteq \RR^2 \times \RR^2$. We homogenize this data and represent it by
  three matrices with $m$ rows:
\begin{align} 
X &= \begin{pmatrix}
x_1^\top\\
\vdots\\
x_m^\top
\end{pmatrix} \in \RR^{m \times 3},\\
Y &= \begin{pmatrix}
y_1^\top\\
\vdots\\
y_m^\top
\end{pmatrix} \in \RR^{m \times 3},\ \textup{and }\\
Z &= \begin{pmatrix}
y_1^\top \otimes x_1^\top\\
\vdots\\
y_m^\top \otimes x_m^\top
\end{pmatrix} \in \RR^{m \times 9}.  \label{formula of Z}
\end{align}

The ranks of $X$ and $Y$ are related to the geometry of the point sets $\{x_i\}$ and $\{y_i\}$. This is made precise by the following lemma which is stated in terms of $X$ but obviously also applies to $Y$.
\begin{lemma} \label{lem:geometry of rankX}
\[
\rank(X)=\begin{cases}
1 & \mbox{ If $x_i$'s, as points in $\RR^2$, are all equal.}\\
2 & \mbox{ If all the $x_i$'s are collinear in $\RR^2$ but not all equal.}\\
3 & \mbox{ If the $x_i$'s are noncollinear in $\RR^2$.}
\end{cases}
\]
\end{lemma}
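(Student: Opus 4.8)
The plan is to use the fact that the three geometric hypotheses on the points $x_1,\dots,x_m \in \RR^2$ are mutually exclusive and together exhaust all possibilities, so it suffices to compute $\rank(X)$ under each hypothesis separately. Throughout, I would keep in mind that the $i$-th row of $X$ is the homogenization $x_i^\top = (x_{i1},x_{i2},1)$, which is never the zero vector. If all the $x_i$ are the same point of $\RR^2$, then every row of $X$ equals one fixed nonzero vector, so $\rank(X) = 1$.

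Next, suppose the $x_i$ are collinear but not all equal. A line in $\RR^2$ has an equation $\alpha a + \beta b + \gamma = 0$ with $(\alpha,\beta) \neq (0,0)$; evaluating this at each $x_i$ shows that $(\alpha,\beta,\gamma)^\top \in {\rm ker}_\RR(X)$, which is therefore nonzero, so $\rank(X) \leq 2$. On the other hand, since the $x_i$ are not all equal, two of the rows of $X$, say $x_i^\top$ and $x_j^\top$, correspond to distinct points; two vectors whose last coordinate equals $1$ cannot be scalar multiples of each other unless they are identical, so these two rows are linearly independent and $\rank(X) \geq 2$. Hence $\rank(X) = 2$.

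Finally, suppose the $x_i$ are noncollinear, so there are three of them, $x_i, x_j, x_k$, lying on no common line. The standard determinant criterion for collinearity states that three points of $\RR^2$ are collinear if and only if $\det\begin{pmatrix} x_i^\top \\ x_j^\top \\ x_k^\top \end{pmatrix} = 0$; since our three points are not collinear, this $3\times 3$ submatrix of $X$ is invertible, and therefore $\rank(X) = 3$. The whole argument is elementary; the only step requiring a moment's thought is this last criterion, which I would justify by subtracting the first row from the other two and recognizing the resulting $2\times 2$ determinant as (twice) the signed area of the triangle with vertices $x_i, x_j, x_k$, which vanishes exactly when the three points are collinear. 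I do not expect any genuine obstacle in this lemma.
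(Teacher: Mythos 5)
Your proof is correct and complete. The paper states this lemma without any proof, treating it as elementary, and your argument (a line's defining functional giving a kernel vector for the upper bound in the collinear case, distinct homogenized points being independent for the lower bound, and the determinant/signed-area criterion for three noncollinear points) is exactly the standard justification one would supply; the only step left implicit is the one-line observation that a noncollinear family contains two distinct points and then a third off their line, which is immediate.
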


Notice that every row of $X$ (resp. $Y$) can be written as a linear combination of $\rank(X)$ (resp. $\rank(Y)$) rows of it.
Using this and the bilinearity of Kronecker product, it is evident that:
\begin{lemma} \label{lem:rank bound}
For any $m$, 
$$\rank(Z)\leq \rank(X)\,\rank(Y) \leq 9.$$
\end{lemma}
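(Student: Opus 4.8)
The plan is a short, direct argument built on the bilinearity of the Kronecker product, using exactly the observation recorded just before the statement: every row of $X$ (resp.\ $Y$) is a linear combination of $\rank(X)$ (resp.\ $\rank(Y)$) fixed rows of that matrix. Set $r=\rank(X)$ and $s=\rank(Y)$. First I would fix indices $i_1,\dots,i_r$ so that the rows $x_{i_1}^\top,\dots,x_{i_r}^\top$ span the row space of $X$, and independently fix $j_1,\dots,j_s$ so that $y_{j_1}^\top,\dots,y_{j_s}^\top$ span the row space of $Y$. Then for each $i\in\{1,\dots,m\}$ there are scalars $c_{ik}$, $d_{il}$ with $x_i^\top=\sum_{k=1}^r c_{ik}\,x_{i_k}^\top$ and $y_i^\top=\sum_{l=1}^s d_{il}\,y_{j_l}^\top$.

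The main step is to expand the $i$-th row of $Z$ using bilinearity of $\otimes$:
\[
y_i^\top\otimes x_i^\top
=\left(\sum_{l=1}^{s} d_{il}\,y_{j_l}^\top\right)\otimes\left(\sum_{k=1}^{r} c_{ik}\,x_{i_k}^\top\right)
=\sum_{l=1}^{s}\sum_{k=1}^{r} d_{il}c_{ik}\,\bigl(y_{j_l}^\top\otimes x_{i_k}^\top\bigr).
\]
Thus every row of $Z$ lies in the span of the $rs$ fixed vectors $\{\,y_{j_l}^\top\otimes x_{i_k}^\top : 1\le k\le r,\ 1\le l\le s\,\}\subseteq\RR^{1\times 9}$, so $\rank(Z)\le rs=\rank(X)\,\rank(Y)$. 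Finally, since $X$ and $Y$ each have only three columns, $\rank(X)\le 3$ and $\rank(Y)\le 3$, giving $\rank(X)\,\rank(Y)\le 9$.

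I do not expect a genuine obstacle here; this is essentially a one-line consequence of bilinearity. The only point worth stating carefully is that the index sets $\{i_k\}$ and $\{j_l\}$ are chosen independently, so the $rs$ Kronecker products $y_{j_l}^\top\otimes x_{i_k}^\top$ appearing in the display need not themselves be rows of $Z$. This is harmless: we only use them as a fixed spanning set for the row space of $Z$, which the displayed identity shows they are, and that is all that is needed to bound $\rank(Z)$.
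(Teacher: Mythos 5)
Your argument is correct and is exactly the route the paper intends: the remark preceding the lemma (each row of $X$, resp.\ $Y$, is a combination of $\rank(X)$, resp.\ $\rank(Y)$, rows, plus bilinearity of the Kronecker product) is the paper's entire justification, and you have simply written out that expansion in full. Nothing further is needed.
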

In particular, if all points $x_i$ are collinear in $\RR^2$ then $\rank(Z)\leq 6$. If all points $x_i$ are equal in $\RR^2$ then $\rank(Z)\leq 3$.

We study Question~\ref{question1} via the the subspace $\ker_\RR(Z)$. Observe that for all $m$,  $\ker_\RR(Z) = \ker_\RR(Z')$ for a supmatrix $Z'$ of $Z$ consisting of $\rank(Z)$ linearly independent rows. Therefore, we can replace $Z$ with $Z'$ in order to study $\ker_\RR(Z)$ which allows us to restrict our investigations to the values of $m$ such that 
\begin{framed}
\begin{equation}
1 \le m = \rank(Z) \le 9. \label{eqn:rank Z is m}
\end{equation}
\end{framed}

In light of the above discussion, it is useful to keep in mind that even though all our results are stated in terms of $m \le 9$, we are in fact covering all values of $m$.

\section{Fundamental Matrices}
\label{sec:fundamental}

Following Section \ref{sec:epipolar},
a fundamental matrix is any matrix in $\PP_\RR^{3 \times 3}$ of rank two~\cite[Section 9.2.4]{hartley-zisserman-2003}.  In our notation, we denote  the set of
fundamental matrices as 
\begin{equation}
\mathcal{F} \,\,:= \,\,\{ f \in \PP_\RR^8 \,:\, \rank(F) = 2 \},
\end{equation}
where the vector $f$ is
the concatenation of the rows of the matrix $F$. This notation allows us to write the epipolar constraints (\ref{eq:epipolar for F})
as 
\begin{equation}
Zf = 0.
\end{equation}

Hence a fundamental matrix $F$ exists for the
$m$ given point correspondences
  if and only if the linear subspace
 $\ker_\RR(Z)$ intersects the set $\mathcal{F}$, i.e., 
\begin{align} 
\ker_\RR(Z) \cap \mathcal{F} \neq \emptyset.
\end{align}
This geometric reformulation of the existence question for $F$ is well-known in multiview geometry 
\cite{hartley-zisserman-2003,MaybankBook}.

We now introduce two 
complex varieties that are closely related to $\mathcal{F}$. 

Let $\mathcal{R}_1 := \{a \in \PP_\CC^8 \,:\, \rank(A) \leq 1 \}$ be the set of matrices in $\PP_\CC^{3 \times 3}$ of rank one. 
It is an irreducible variety with $\dim(\mathcal{R}_1) = 4$ and $\degree(\mathcal{R}_1)=6$. 

Let $\mathcal{R}_2 := \{ a\in \PP_\CC^8 \,:\, \rank(A) \leq 2 \}$ be the set of matrices in $\PP_\CC^{3 \times 3}$ of rank at most two. It is an irreducible variety  with $\dim(\mathcal{R}_2) = 7$ and $\degree(\mathcal{R}_2)=3$. Observe that
\begin{equation}
\mathcal{R}_2 = \{ a \in \PP_\CC^8 \,:\, \det(A) = 0 \}.
\end{equation}

The set of fundamental matrices can now be written as $\mathcal{F} = (\mathcal{R}_2 \backslash \mathcal{R}_1) \cap \PP_\RR^8$ which is not a variety over  $\RR$.

In this section we will give a complete answer to the question of existence of fundamental matrices for any number $m$ of point correspondences.  Recall from Section~\ref{sec:xyz} \eqref{eqn:rank Z is m} that assuming $m = \rank(Z)$, we only need to consider the cases $1\le m \le 9$.

\subsection{Case: $m=9$}
If $m = 9$, then $\ker_\RR(Z)\subseteq \PP_\RR^8$ is empty, and $Z$ has no fundamental matrix.

\subsection{Case: $m=8$}
If $m=8$, then $\ker_\RR(Z)\subseteq \PP_\RR^8$ is a point $a \in \PP_\RR^8$ corresponding to the matrix $A \in \PP_\RR^{3 \times 3}$. It is possible for  
$A$ to have rank one, two or three. Clearly, $Z$ has a fundamental matrix if and only if $A$ has rank two.

\subsection{Case: $m=7$}
\label{sec:F m = 7}
The majority of the literature in computer vision deals with the case of $m= 7$ which falls under the category of  ``minimal problems''; 
see for example \cite[Chapter 3]{stewenius_thesis}. 
The name refers to the fact that $m=7$ is the smallest value of $m$ for which $\ker_\CC(Z) \cap \mathcal{R}_2$ is finite, making the problem of estimating $F$ well-posed (at least over $\CC$). 

Indeed, when $m=7$, $\ker_\CC(Z)$ is a one-dimensional subspace of $\PP^8_\CC$ and hence by Theorem~\ref{thm:intersections}, generically it will intersect $\mathcal{R}_2$ in three points, of which at least one is real since $\det(A)$ is a degree three polynomial.  
Therefore, there is always a matrix of rank at most two in $\ker_\RR(Z)$. This leads to the common belief that when $m=7$, there is always a fundamental matrix for $Z$. 

We first show an example for which $\ker_\RR(Z)$ contains only matrices of ranks either one or three.
\begin{example}
\label{example:rank one and three}
{\rm
Consider 
$$
X = \left( \begin{array}{ccc}
\frac{1}{5} & -1 & 1 \\
-1 & -7 & 1\\
\frac{-1}{2} & 0 & 1\\
-2 & -12 & 1 \\
\frac{-57}{4} & 8 & 1 \\
2 & 8 & 1 \\
0 & \frac{-1}{9} & 1
\end{array}
\right)
\textup{ and }
Y = \left( \begin{array}{ccc}
0 & 1 & 1 \\
1 & 0 & 1\\
 2 & 5 & 1  \\
 3 & \frac{-5}{12} & 1 \\
 4 & 7 & 1 \\
 5 & \frac{-11}{8} & 1 \\
 6 & 9 & 1
 \end{array}
\right).
$$
Here, $\rank(Z)=7$ and $\ker_\RR(Z)$ is spanned by the rank three matrices 
$$
I = \begin{pmatrix}
1 & 0 & 0 \\
0 & 1 & 0 \\
0 & 0 & 1 
\end{pmatrix} \text{ and } A_2=
\begin{pmatrix}
0 & 1 & 2 \\
5 & 4 & -2 \\
-15 & 3 & 11 
\end{pmatrix}.
$$
For any $u_1,u_2\in \RR$,  one obtains 
$$\det(Iu_1 + A_2u_2) = (u_1 + 5u_2)^3.$$ 
If $\det(Iu_1 + A_2u_2) =0$, then $u_1=-5u_2$ and 
$$
Iu_1 + A_2u_2 = u_2(A_2-5I) = u_2 
\left(\begin{array}{ccc}
-5 & 1 & 2 \\ 5 & -1 & -2 \\ -15 & 3 & 6 \end{array} \right)
$$
which has rank at most one.
Thus for $(u_1,u_2)\neq (0,0)$, 
$$
\rank(Iu_1+A_2u_2) = \begin{cases}
1  & \mbox{ if } u_1+5u_2=0\\
3  & \mbox{ if } u_1+5u_2\neq 0.
\end{cases}
$$
Hence ${\rm ker}_\RR(Z)$ consists of matrices of rank either one or three, and $Z$ does not have a fundamental matrix.
}
\end{example}

Another way for $Z$ to not have a fundamental matrix is if $\ker_\RR(Z)$ is entirely in $\mathcal{R}_1$.
The following theorem whose proof can be found in Appendix~\ref{sec:m=7 and kernel in rank one}, characterizes this situation.
See Figure~\ref{fig:m=7 and kernel in rank one} for illustrations.

\begin{figure*}[t!]
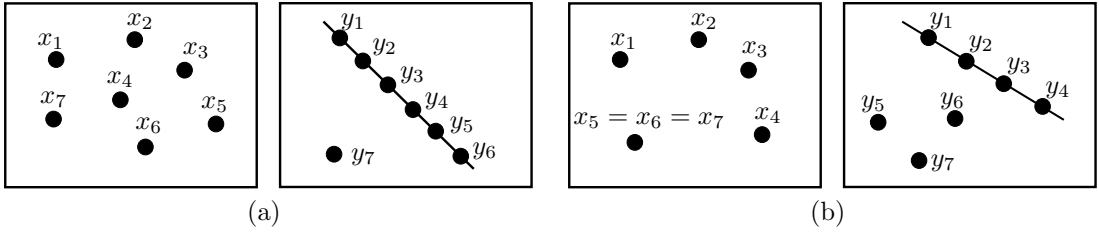

\centering
\begin{tabular}{cc}
\def\svgwidth{200pt} \input{example1_pdf.tex} &
\def\svgwidth{200pt} \input{example2_pdf.tex}\\
(a) & (b)
\end{tabular}
\caption{Two examples where the conditions for Theorem~\ref{thm:m=7 and kernel in rank one} are satisfied and there does not exist a fundamental matrix for $m = 7$ because $\ker_\RR(Z)\subseteq \mathcal{R}_1$.}
\label{fig:m=7 and kernel in rank one}
\end{figure*}

\begin{theorem}  \label{thm:m=7 and kernel in rank one}
If $m=7$, $\ker_\RR(Z) \subseteq \mathcal{R}_1$ if and only if one of the following holds:
\begin{enumerate}[(1)]
\item There is a nonempty proper subset $\tau \subset \{1,\ldots,7\}$ such that as points in $\RR^2$, $\{y_i \,:\, i \in \tau \}$ are collinear and 
$x_i = x_j$ for all $i,j \notin \tau$. \label{thm:m=7 and kernel in rank one, part 1}
\item There is a nonempty proper subset $\tau \subset \{1,\ldots,7\}$ such that as points in $\RR^2$, $\{x_i \,:\, i \in \tau \}$ are collinear and 
$y_i = y_j$ for all $i,j \notin \tau$.\label{thm:m=7 and kernel in rank one, part 2}
\end{enumerate}
\end{theorem}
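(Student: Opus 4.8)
The plan is to analyze when the two-dimensional subspace $\ker_\RR(Z)\subseteq\PP_\RR^8$ (recall $m=7$ forces $\dim\ker_\RR(Z)=1$ in $\PP^8$, i.e.\ a pencil of matrices) consists entirely of rank-one matrices, and then translate that linear-algebraic condition back into the geometry of the $x_i$ and $y_i$. So the first move is: let $\{A,B\}$ span $\ker(Z)$ as a $2$-dimensional linear subspace of $\RR^{3\times3}$, and assume every nonzero $A u_1 + B u_2$ has rank $\le 1$. A pencil of $3\times3$ matrices all of rank $\le 1$ is very rigid. I would invoke Lemma~\ref{lem:flanders} (Flanders): a subspace of $\RR^{3\times 3}$ whose elements all have rank $\le 1$ has dimension at most $1\cdot 3 = 3$, which is not immediately a contradiction, but combined with Lemma~\ref{lem:meshulam}, a rank-$\le 1$ subspace of dimension $r\cdot n$ with $r=1,n=3$ would be forced into the form $W\otimes\RR^3$ or $\RR^3\otimes W$; our subspace has dimension $2 < 3$, so Meshulam does not directly apply, and I would instead argue by hand: a $2$-dimensional pencil of rank-$\le1$ matrices must have either a common left kernel or a common right kernel of dimension $\ge 2$. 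Concretely, writing rank-one matrices as $pq^\top$, the image (column space) of every matrix in the pencil lies in a common line, or the row space of every matrix lies in a common line — this is the key structural dichotomy, provable by a short direct argument on $2\times2$ minors vanishing identically in $(u_1,u_2)$.

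Next I would convert each horn of that dichotomy into a statement about $Z$. Suppose every $F\in\ker(Z)$ has row space contained in a common line, i.e.\ $F = c\, \ell^\top$ for a fixed row vector $\ell\in\RR^3$ and varying column vector $c = c(F)$. The epipolar constraint $y_i^\top F x_i = 0$ becomes $y_i^\top c\,(\ell^\top x_i) = 0$ for all $i$. Partition $\{1,\dots,7\}$ into $\tau = \{i : \ell^\top x_i \ne 0\}$ and its complement. For $i\notin\tau$ the constraint is automatically satisfied, whereas for $i\in\tau$ we need $y_i^\top c = 0$, i.e.\ all $y_i$ with $i\in\tau$ lie on the line $\{z : z^\top c = 0\}$ — so $\{y_i : i\in\tau\}$ are collinear. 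Meanwhile $i\notin\tau$ means $\ell^\top x_i = 0$, i.e.\ those $x_i$ lie on the line $\ell$; to get the stronger conclusion that $x_i = x_j$ for $i,j\notin\tau$ I would use the hypothesis $\rank(Z)=7$ (via Lemma~\ref{lem:rank bound} and Lemma~\ref{lem:geometry of rankX}): if at least two of the $x_i$ for $i\notin\tau$ were distinct on the line $\ell$, the relevant rows of $Z$ would be too degenerate and $\rank(Z)$ could not reach $7$ — this is the rank-bookkeeping heart of the argument. Dually, a common column space for the pencil gives case~\eqref{thm:m=7 and kernel in rank one, part 2} after swapping the roles of $x$ and $y$. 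For the converse direction, I would start from the stated configuration (say case~\eqref{thm:m=7 and kernel in rank one, part 1}), exhibit explicitly that every matrix of the form $c\,\ell^\top$ with $\ell$ the line through $\{x_i : i\notin\tau\}$ (which is a single point, so $\ell$ is any line through it) and $c$ perpendicular to the line through $\{y_i : i\in\tau\}$ lies in $\ker(Z)$, check these fill out a $2$-dimensional space (using that $\rank(Z)=7$ pins $\dim\ker(Z)=2$), and observe every such matrix has rank $\le 1$.

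The main obstacle I anticipate is the rank bookkeeping in both directions: one must be careful that "$\rank(Z)=7$" interacts correctly with the collinearity/coincidence conditions, since $\rank(Z)\le\rank(X)\rank(Y)$ and the various degenerate configurations (all $x_i$ equal, all $y_i$ collinear, mixed cases) must be shown to be exactly the ones giving $\dim\ker(Z)\ge 2$ inside $\mathcal{R}_1$ — and conversely that the listed configurations really do force $\rank(Z)=7$ rather than something smaller. A secondary subtlety is handling the "both horns simultaneously" case (common row line \emph{and} common column line), which would correspond to $\tau$ and its complement both behaving specially; I expect this to collapse into one of the two listed cases or into an even more degenerate sub-case already covered, but it needs to be checked rather than waved away. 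The pencil-structure dichotomy itself (rank-$\le1$ pencils have a common kernel on one side) is standard and short, so I would relegate it to a one-line citation or a two-line inline argument and spend the write-up budget on the geometric translation and the rank count.
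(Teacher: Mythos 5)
Your overall strategy --- reduce to a pencil of rank-one matrices, prove the dichotomy ``common column space or common row space,'' translate each horn into geometry, and handle the converse by explicit construction --- is the same as the paper's, and your ``if'' direction is essentially the paper's construction. But the translation step in your ``only if'' direction contains a genuine error. In the horn where every kernel matrix is $c\,\ell^\top$ with $\ell$ fixed and $c$ varying, the vector $c$ ranges over a \emph{two-dimensional} space of columns (otherwise $\ker_\RR(Z)$ would be a single point of $\PP_\RR^8$, not a pencil). Hence for $i\in\tau=\{i:\ell^\top x_i\neq 0\}$ the condition $y_i^\top c=0$ must hold for all $c$ in that plane, which forces $y_i$ to be the \emph{single} point of $\PP_\RR^2$ orthogonal to it: the $y_i$ with $i\in\tau$ are all equal, not merely collinear. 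Meanwhile the $x_i$ with $i\notin\tau$ lie on the line $\ell$ and are in general only collinear. So this horn yields case~\eqref{thm:m=7 and kernel in rank one, part 2} of the theorem (with $\tau$ replaced by its complement), whereas you try to extract case~\eqref{thm:m=7 and kernel in rank one, part 1} from it: you settle for ``$\{y_i: i\in\tau\}$ collinear'' (implicitly treating $c$ as one fixed vector) and then plan to force $x_i=x_j$ for $i,j\notin\tau$ by a rank-of-$Z$ argument. That planned step is false. Take $x_1,\dots,x_6$ distinct points on a line with normal $v$, $x_7$ off the line, and generic $y_1,\dots,y_7$: then $\rank(Z)=7$, and $\ker_\RR(Z)=\{u\,v^\top : u^\top y_7=0\}\subseteq\mathcal{R}_1$, yet the six $x_i$ on the line are pairwise distinct. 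So ``two distinct $x_i$ on $\ell$ forces $\rank(Z)<7$'' is unprovable, and your argument never reaches either alternative of the theorem from this horn. (The same mismatch occurs dually; note also that in your own ``if'' direction you build the kernel for case~\eqref{thm:m=7 and kernel in rank one, part 1} with \emph{fixed} column $c$ and varying row $\ell$, i.e.\ common column space, contradicting the horn assignment you use in the converse.)

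The repair is what the paper does: writing $A_1=u_1v_1^\top$, $A_2=u_2v_2^\top$ and using $\rank(A_1+A_2)=1$ to assume $u_1=u_2=u$, the constraints give, for each $i$, either $y_i^\top u=0$ or $x_i\in\mathrm{span}\{v_1,v_2\}^\perp$, which is a single point of $\PP_\RR^2$; this is case~\eqref{thm:m=7 and kernel in rank one, part 1} directly, with $\tau=\{i: y_i^\top u=0\}$. The hypothesis $\rank(Z)=7$ is used only to show $\tau$ is nonempty and proper: ``all $y_i$ collinear'' gives $\rank(Z)\le 6$ and ``all $x_i$ equal'' gives $\rank(Z)\le 3$, by Lemmas~\ref{lem:geometry of rankX} and~\ref{lem:rank bound}. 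The case $v_1=v_2$ gives case~\eqref{thm:m=7 and kernel in rank one, part 2} symmetrically. In short, the equality-versus-collinearity bookkeeping must come out of the pencil structure itself (the varying side forces a point, the fixed side forces a line); it cannot be recovered afterwards by counting the rank of $Z$.
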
 

\subsection{Case: $m = 6$}
\label{sec:F m = 6}
When $m \leq 6$, by Theorem~\ref{thm:intersections}, $\ker_\CC(Z) \cap \mathcal{R}_2$ is infinite, and here the conventional wisdom is that there are infinitely many fundamental matrices for $Z$ and thus these cases deserve no study. 

Indeed, it is true that for six points in two views in general position, there exists a fundamental matrix relating them. To prove this, we first note the following fact which is a generalization of a result of Chum et al.~\cite{matas-et-al-2005}. 
Its proof can be found in Appendix~\ref{sec:chum}.

\begin{lemma}\label{lemma:chum}
Assume there is a real $3\times 3$ invertible matrix $H$ such that 
for at least $m-2$ of the indices $i \in \{1,\ldots,m\}$, $y_i \sim H x_i$. Then $Z$ has a fundamental matrix.
\end{lemma}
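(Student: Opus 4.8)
The plan is to construct an explicit fundamental matrix from the homography $H$. Recall from Section~\ref{sec:epipolar} that every matrix of the form $F = [b]_\times H$ with $b \neq 0$ and $H$ invertible has rank two (since $[b]_\times$ has rank two and $H$ is invertible), hence lies in $\mathcal{F}$. So it suffices to find a nonzero vector $b$ such that $F = [b]_\times H$ satisfies the epipolar constraint $y_i^\top F x_i = 0$ for all $i = 1,\dots,m$. The key observation is that whenever $y_i \sim H x_i$, the constraint $y_i^\top [b]_\times H x_i = 0$ becomes $y_i^\top [b]_\times y_i = 0$ up to a nonzero scalar, and this holds automatically because $[b]_\times$ is skew-symmetric (equivalently, $y_i^\top(b \times y_i) = 0$ since $b \times y_i \perp y_i$). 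Thus the constraints coming from the $m - 2$ indices where $y_i \sim H x_i$ are satisfied for \emph{every} choice of $b$.

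Next I would handle the at most two remaining indices, say $i = 1$ and $i = 2$ (the case of fewer such indices is only easier). For each such index the constraint $y_i^\top [b]_\times H x_i = 0$ is a single linear equation in the three unknowns $b = (b_1, b_2, b_3)$: writing it as $(H x_i \times y_i)^\top b = 0$ using the identity $y_i^\top [b]_\times v = y_i^\top (b \times v) = -b^\top(y_i \times v) = (v \times y_i)^\top b$ with $v = H x_i$. Two homogeneous linear equations in $\RR^3$ always have a nontrivial common solution $b \neq 0$; pick any such $b$. Then $F = [b]_\times H$ is invertible-times-rank-two, so $\rank(F) = 2$, and $F$ satisfies all $m$ epipolar constraints, so $F$ is a fundamental matrix for $Z$.

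The only subtlety — and the step I would treat most carefully — is a degenerate possibility: if the chosen $b$ makes $[b]_\times H$ have rank less than two, i.e.\ if $b = 0$, which we have excluded, there is no issue; $[b]_\times$ has rank exactly two for any nonzero $b$, and composing with the invertible $H$ preserves rank. So in fact there is no obstruction here at all once $b \neq 0$ is secured, which the dimension count above guarantees. One should double-check the edge cases where $Hx_i = 0$ (impossible, since $H$ is invertible and $x_i \neq 0$ as its last coordinate is $1$) and where $Hx_i \times y_i = 0$ for a ``bad'' index (then that index imposes no constraint, making things even easier). Assembling these observations gives the lemma; the generalization over \cite{matas-et-al-2005} is precisely the relaxation from ``all $m$ indices'' to ``at least $m - 2$ indices,'' which is exactly the slack absorbed by the two free linear conditions on $b$.
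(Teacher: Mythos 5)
Your proposal is correct and follows essentially the same route as the paper: both use the parametrization $F = [b]_\times H$ with the hypothesized homography $H$, observe that $y_i \sim H x_i$ makes the constraint $b^\top(y_i \times H x_i)=0$ hold automatically (the column $y_i \times Hx_i$ vanishes), and then note that the at most two remaining homogeneous linear conditions on $b \in \RR^3$ always admit a nonzero solution, yielding a rank-two $F$.
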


An immediate consequence of Lemma~\ref{lemma:chum} with $m=6$ and 
Lemma~\ref{lem:non-collinear} with $n=2$ is the following.
 
\begin{theorem} \label{thm:F generically for m=6}
If $m=6$ and $\tau$ is a subset of $1,\ldots,6$ with four elements such that no set of three points in either   
$\{x_i \ : \  i\in \tau\}$ or $\{y_i \ : \  i \in \tau\}$ is collinear in $\RR^2$, then $Z$ has a fundamental matrix. 
\end{theorem}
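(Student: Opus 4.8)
The plan is to reduce the statement to the two lemmas that immediately precede it: Lemma~\ref{lemma:chum} produces a fundamental matrix for $Z$ as soon as some real invertible $H\in\RR^{3\times 3}$ satisfies $y_i\sim Hx_i$ for at least $m-2$ of the indices, while Lemma~\ref{lem:non-collinear} (applied with $n=2$) manufactures such an $H$ from four prescribed correspondences in ``general position''. Since $m=6$ we have $m-2=4=|\tau|$, so the numerology lines up exactly; the only thing to supply is the dictionary between the hypothesis of Theorem~\ref{thm:F generically for m=6}, which is phrased in terms of collinearity of points in $\RR^2$, and the linear-independence hypothesis of Lemma~\ref{lem:non-collinear}, which is phrased in terms of vectors in $\RR^3$.

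First I would record the elementary fact that three points $p,q,r\in\RR^2$ are collinear if and only if their homogenizations $(p^\top,1)^\top,(q^\top,1)^\top,(r^\top,1)^\top\in\RR^3$ are linearly dependent; this is just the vanishing of the $3\times 3$ determinant having these three vectors as rows, the classical collinearity determinant. Consequently, the hypothesis that no three points of $\{x_i:i\in\tau\}$ are collinear in $\RR^2$ is equivalent to saying that no three of the homogenized vectors $x_i$, $i\in\tau$, are linearly dependent in $\RR^3$, and likewise for $\{y_i:i\in\tau\}$; note also that ``no three collinear'' forces the four points in each set to be pairwise distinct. I would then apply Lemma~\ref{lem:non-collinear} with $n=2$ to the two four-element sets $\{x_i:i\in\tau\}$ and $\{y_i:i\in\tau\}$ of points in $\RR^3$, obtaining an invertible $H\in\RR^{3\times 3}$ with $Hx_i\sim y_i$ for every $i\in\tau$. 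Finally, this $H$ witnesses $y_i\sim Hx_i$ for the $4=m-2$ indices in $\tau$, so Lemma~\ref{lemma:chum} applies and yields a fundamental matrix for $Z$, which completes the proof.

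I do not expect a genuine obstacle here: the substantive work has been front-loaded into Lemmas~\ref{lemma:chum} and~\ref{lem:non-collinear}, and the argument for Theorem~\ref{thm:F generically for m=6} is essentially a two-line concatenation of them. The closest thing to a subtle point is the collinearity $\leftrightarrow$ linear-dependence equivalence recalled above, together with the bookkeeping check that $|\tau|=4$ is precisely the threshold $m-2$ demanded by Lemma~\ref{lemma:chum}; both are routine.
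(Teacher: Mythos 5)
Your proposal is correct and follows exactly the paper's own route: the paper states the theorem as ``an immediate consequence of Lemma~\ref{lemma:chum} with $m=6$ and Lemma~\ref{lem:non-collinear} with $n=2$,'' which is precisely your concatenation, with your collinearity-versus-linear-dependence remark being the intended (and routine) dictionary between the two hypotheses. Nothing is missing.
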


Note that in \cite[Theorem 1.2]{hartley-six-point}, Hartley shows that a fundamental matrix associated with 
six point correspondences is uniquely determined under certain geometric assumptions on 
the point correspondences and world points. One of Hartley's assumptions is the assumption of Theorem~\ref{thm:F generically for m=6}.

Theorem~\ref{thm:F generically for m=6} hints at the possibility that collinearity of points in any one of the two views may prevent a fundamental matrix from existing. The following theorem, whose proof can be found in Appendix~\ref{sec:m=6 and kernal in rank one}, characterizes the conditions under which ${\rm ker}_\RR(Z) \subseteq \mathcal{R}_1$ when $m=6$. No fundamental matrix can exist in this case.

\begin{theorem} \label{thm:m=6 and kernal in rank one}
If $m=6$, ${\rm ker}_\RR(Z) \subseteq \mathcal{R}_1$ if and only if either
all points $x_i$ are collinear in $\RR^2$ or all points $y_i$ are collinear in $\RR^2$.
\end{theorem}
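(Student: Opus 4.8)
Both directions will flow from analyzing $\ker_\RR(Z)$ as a $3$-dimensional linear subspace of $\RR^{3\times 3}\cong\RR^9$: under the standing convention $m=\rank(Z)=6$ we have $\dim_\RR\ker_\RR(Z)=9-6=3$, and it is precisely this exact dimension that will let us invoke Meshulam's theorem (Lemma~\ref{lem:meshulam}). I will identify the $i$-th row $y_i^\top\otimes x_i^\top$ of $Z$ with the rank-one matrix $y_ix_i^\top\in\RR^{3\times3}$ built from the homogenized vectors $x_i,y_i\in\RR^3$; under the trace pairing $\langle F,G\rangle=\operatorname{tr}(F^\top G)$ the constraint $Zf=0$ becomes $y_i^\top Fx_i=0$ for all $i$, and $\ker_\RR(Z)=\operatorname{rowspace}(Z)^\perp$.

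For the ``if'' direction, assume all $x_i$ are collinear in $\RR^2$, so $\rank(X)\le 2$; since $\rank(Z)=6>3$ and $\rank(Z)\le\rank(X)\rank(Y)\le 3\,\rank(X)$ by Lemma~\ref{lem:rank bound}, we must have $\rank(X)=2$. Let $P:=\operatorname{rowspace}(X)\subseteq\RR^3$ be the resulting plane and set $U:=\{F\in\RR^{3\times3}:Fx=0\text{ for all }x\in P\}$. Every $F\in U$ satisfies $P\subseteq\ker(F)$, hence $\rank(F)\le1$, so $U\subseteq\mathcal R_1$; and $\dim U=3$ since each row of $F$ is forced to lie in the line $P^\perp$. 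Because $x_i\in P$ makes $y_i^\top Fx_i=0$ for every $F\in U$, we get $U\subseteq\ker_\RR(Z)$, and matching dimensions yields $U=\ker_\RR(Z)\subseteq\mathcal R_1$. The case in which all $y_i$ are collinear is the same argument applied to $F^\top$ and $Y$.

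For the ``only if'' direction, assume $\ker_\RR(Z)\subseteq\mathcal R_1$, so that $V:=\ker_\RR(Z)$ is a $3$-dimensional real subspace of $\RR^{3\times3}$ every element of which has rank $\le1$ (real and complex rank coincide for real matrices). Lemma~\ref{lem:meshulam} with $n=3$, $r=1$ then gives $V=W\otimes\RR^3$ or $V=\RR^3\otimes W$ for a line $W=\RR w$ with $w\ne0$; explicitly, $V=\{wv^\top:v\in\RR^3\}$ or $V=\{vw^\top:v\in\RR^3\}$. Dualizing, $\operatorname{rowspace}(Z)=V^\perp$ equals $\{G:G^\top w=0\}$ in the first case and $\{G:Gw=0\}$ in the second. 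Since each $y_ix_i^\top\in\operatorname{rowspace}(Z)$, in the first case $x_i(y_i^\top w)=(y_ix_i^\top)^\top w=0$ with $x_i\ne0$ forces $y_i^\top w=0$ for all $i$, and in the second case $y_i(x_i^\top w)=(y_ix_i^\top)w=0$ with $y_i\ne0$ forces $x_i^\top w=0$ for all $i$. Either way a single nonzero $w$ annihilates all the homogenized points of one view; as those points have last coordinate $1$, the first two entries of $w$ are not both zero, so $w$ cuts out an honest affine line in $\RR^2$ through all of them, i.e.\ all $y_i$ (resp.\ all $x_i$) are collinear.

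I expect no deep obstacle here: the substance is the application of Meshulam's theorem, which is available to us, and the rest is bookkeeping --- correctly matching the two Kronecker factors $W\otimes\RR^3$ and $\RR^3\otimes W$ to collinearity in the $y$-view versus the $x$-view, and verifying that the annihilating functional yields a finite line rather than the line at infinity (automatic from homogenization). The one point to handle carefully is the dimension count $\dim_\RR\ker_\RR(Z)=3$, which must hold exactly for Meshulam to apply and which relies on the standing reduction $m=\rank(Z)$.
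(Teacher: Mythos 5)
Your proof is correct and follows essentially the same route as the paper: the ``if'' direction exhibits an explicit three-dimensional subspace of rank-one matrices inside $\ker_\RR(Z)$ and concludes by the dimension count $\dim\ker_\RR(Z)=9-\rank(Z)=3$, and the ``only if'' direction applies Lemma~\ref{lem:meshulam} with $n=3$, $r=1$ to write the kernel as $\{wv^\top\}$ or $\{vw^\top\}$ and deduce that $w$ annihilates all the $y_i$ (resp.\ $x_i$). The only cosmetic differences are that you key the ``if'' case to collinear $x_i$'s via matrices vanishing on their span (the paper uses the symmetric left-annihilator construction for collinear $y_i$'s) and you phrase the ``only if'' bookkeeping through $\mathrm{rowspace}(Z)=\ker_\RR(Z)^\perp$ rather than the paper's direct computation $(y_i^\top u)(x_i^\top v_j)=0$; both are sound.
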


We remark that when $m=6$, it is impossible that as points in $\RR^2$, all $x_i$ are collinear and all $y_i$ are collinear. 
If this were the case, then by Lemmas~\ref{lem:geometry of rankX} and \ref{lem:rank bound}, $\rank(Z)\leq 4<6=m$ which violates our assumption 
\eqref{eqn:rank Z is m}.

\subsection{Existence of fundamental matrices in general}

In the previous two sections, we have demonstrated that dimension counting is not enough
to argue for the existence of a fundamental matrix for $m = 6$ and $m = 7$. We have also described particular configurations in two views which guarantee the existence and non-existence of a fundamental matrix. We are now ready to tackle the general existence question for fundamental matrices for $m \le 8$. To do this,  we first need the following key structural lemma. It provides a sufficient condition for $\ker_\RR(Z)$ to have a matrix of rank two.

\begin{lemma} \label{lem:bernd}
Let $\mathcal{L}$ be a positive dimensional subspace in $\PP_\RR^{3 \times 3}$ that contains a matrix of rank three. If  the determinant  restricted to $\mathcal{L}$ is not a power of a linear form, then $\mathcal{L}$ contains a real matrix of rank two.
\end{lemma}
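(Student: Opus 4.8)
The plan is to work with the restriction of the determinant polynomial to the subspace $\mathcal{L}$ and exploit the hypothesis that it is not a power of a linear form. Parametrize $\mathcal{L}$ by coordinates $u = (u_0, u_1, \ldots, u_k)$ with $k = \dim(\mathcal{L}) \ge 1$, so that a matrix in $\mathcal{L}$ is $M(u) = u_0 A_0 + \cdots + u_k A_k$ for a fixed basis $A_0, \ldots, A_k$, and set $g(u) := \det(M(u))$. Since $\mathcal{L}$ contains a rank-three matrix, $g$ is not identically zero; it is a homogeneous cubic in $u$. The zero set $\V(g) \subseteq \PP^k_\RR$ is exactly the locus of matrices in $\mathcal{L}$ of rank at most two, and a matrix in $\mathcal{L}$ of rank at most one lies in $\V(g_1, \ldots, g_9)$ where the $g_j$ are the $2\times 2$ minors (each a quadric in $u$). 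So it suffices to produce a real point of $\V(g)$ that does not lie in all the $g_j$. The key observation is that the rank-at-most-one locus inside $\mathcal{L}$ is a real projective variety cut out by quadrics, hence is a proper Zariski-closed subset of $\V(g)$ unless $\V(g)$ itself is contained in it; I will rule the latter out.

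First I would reduce to the case $k = 1$, i.e.\ to a pencil. Restricting $\mathcal{L}$ to a general line $\ell \cong \PP^1_\RR$ through a rank-three point keeps a rank-three matrix on $\ell$, and the restricted determinant $g|_\ell$ is a binary cubic; the issue is ensuring we can choose $\ell$ so that $g|_\ell$ is still \emph{not} a perfect cube. If $g$ restricted to every line were a perfect cube, then $g$ would be a cube globally (a homogeneous form whose restriction to every line is a $d$-th power is itself a $d$-th power of a linear form, for $d$ equal to its degree), contradicting the hypothesis; so a generic line works. On such an $\ell$, $g|_\ell$ is a real binary cubic that is not $c(\alpha u_0 + \beta u_1)^3$. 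A nonzero real binary cubic always has a real root, and since it is not a perfect cube, at that real root $[u_0^*:u_1^*]$ the corresponding matrix $M^* = u_0^* A_0^{\ell} + u_1^* A_1^{\ell}$ is not identically a rank-$\le 1$ matrix — here is where I must argue: if $M^*$ had rank $\le 1$, the cubic $g|_\ell$ would have a root of multiplicity $3$ there (because along the pencil the determinant vanishes to order equal to the corank contribution, and a corank-two matrix forces all three "eigenvalue factors" to coincide), forcing $g|_\ell$ to be a perfect cube of the linear form vanishing at that point — contradiction. Hence $M^*$ has rank exactly two, and pulling back to $\mathcal{L}$ gives the desired rank-two matrix.

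The step I expect to be the main obstacle is the local analysis connecting the multiplicity of a root of $g|_\ell$ to the rank drop of the corresponding matrix: showing that a root at which the matrix has rank exactly two contributes multiplicity one (or at least that a root forcing rank $\le 1$ must be a triple root). The clean way is to use the adjugate: $g|_\ell(u) = \det(u_0 A + u_1 B)$, and $\frac{d}{du_1} g|_\ell = \mathrm{tr}(\mathrm{adj}(u_0 A + u_1 B)\, B)$, which is nonzero at a rank-two matrix because its adjugate is a nonzero rank-one matrix — so rank-two roots are simple, and the only way a real binary cubic can have all its real roots non-simple while having a real root is to be a perfect cube. Assembling: a generic line gives a non-perfect-cube real binary cubic, which has a real root; that root cannot be a rank-$\le 1$ point of $\mathcal{L}$ (else triple root, contradiction) and cannot be rank three (else $g|_\ell$ nonzero there), so it is rank two. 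This completes the argument.
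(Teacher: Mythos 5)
Your overall route is the same as the paper's: reduce to a projective line on which the determinant is still not a perfect cube (a step the paper also asserts with essentially no proof; your contrapositive sketch for cubics is fine), and then extract a rank-two matrix from the resulting binary cubic. The gap is in the pencil step. You claim that if the matrix at a real root of $g|_\ell(u)=\det(u_0A+u_1B)$ has rank at most one, then that root is triple, so that any real root of a non-cube cubic automatically has rank two. This is false: rank $\le 1$ only forces the root to be \emph{multiple}, not triple (all $2\times 2$ minors vanish, so $\mathrm{adj}=0$ and both partial derivatives of the cubic vanish there; in eigenvalue terms, geometric multiplicity $2$ gives algebraic multiplicity $\ge 2$, not $=3$). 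Concretely, take $A=\mathrm{diag}(0,0,1)$ and $B=I$: then $\det(u_0A+u_1B)=u_1^2(u_0+u_1)$ is not a perfect cube, yet at the real root $[1:0]$ the matrix is $A$, of rank one, and the root is only double. So ``pick any real root; it cannot be a rank-$\le 1$ point'' fails. The auxiliary claim in your ``clean way'' is also false: $\mathrm{tr}\bigl(\mathrm{adj}(C)B\bigr)$ can vanish at a rank-two $C$, e.g.\ $C=\mathrm{diag}(1,1,0)$ and $B$ the symmetric matrix with $B_{23}=B_{32}=1$ and zeros elsewhere give $\det(u_0C+u_1B)=-u_0u_1^2$, so this rank-two matrix sits at a \emph{double} root; rank-two roots need not be simple.

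What you actually need is the converse implication, which is true: at a \emph{simple} real root the matrix has rank exactly two, because rank $\le 1$ would force $\mathrm{adj}=0$ and hence kill both partials of $\det$, contradicting simplicity. Since a real binary cubic that is not a constant times a perfect cube always has a simple real root (its factorization types over $\RR$ are three distinct real linear forms, $l^2m$ with $l\not\sim m$, $l\cdot q$ with $q$ definite, or $l^3$, and all but the last have a simple real root), choosing a simple root repairs your argument. This is in substance what the paper does, phrased via eigenvalues: after normalizing by the invertible member the cubic becomes the characteristic polynomial of a matrix $M$; if the chosen real root gives a rank-one matrix, that eigenvalue is at least double, so the remaining eigenvalue is real, and either it is distinct --- hence a simple root, hence of rank two --- or all three coincide and the determinant is a perfect cube, contradicting the hypothesis. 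Without this case analysis (or the simple-root selection), your assembled argument does not go through.
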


The proof of this lemma can be found in Appendix~\ref{app:lemma bernd}, but we elaborate on its statement.
 If $\{A_1, \ldots, A_t\}$ is a basis of a subspace $\mathcal{L}$ in $\PP_\RR^{3 \times 3}$, then any matrix in $\mathcal{L}$ is of the form $A = u_1 A_1 + \cdots + u_t A_t$ for scalars $u_1, \ldots, u_t \in \RR$,  and $\det(A)$ is a polynomial in $u_1, \ldots, u_t$ of degree at most three. Lemma~\ref{lem:bernd} says that if $\det(A)$ is not a power of a linear form $a_1 u_1 + \cdots + a_t u_t$ where $a_1, \ldots, a_t \in \RR$, then $\mathcal{L}$ contains a matrix of rank two.  

It is worth noting that~Lemma~\ref{lem:bernd} is only a sufficient condition and not necessary for a subspace $\mathcal{L} \subseteq \PP_\RR^{3 \times 3}$ 
 to have a rank two matrix. This is illustrated by the following example:
 
\begin{example}
For 
$$
X = \begin{pmatrix}
-1 & 0 & 1 \\
-3 & 0 & 1\\
6 & 3 & 1\\
0 & 1 & 1 \\
2 & 2 & 1 \\
0 & \f{1}{2}& 1 \\
\f{1}{2} & 1 & 1
\end{pmatrix}
\textup{ and }
Y = \begin{pmatrix}
1 & 0 & 1 \\
\f{1}{3} & 0 & \,1\\
 \f{1}{3} & -1 & \,1  \\
1 & -1 & \,1 \\
 \f{1}{2} & -1 & \,1 \\
 4 & -2 & \,1 \\
 2 & -2 & \,1
 \end{pmatrix}, 
$$
${\rm ker}_\RR (Z)$ is spanned by 
$$ A_1 =
 \begin{pmatrix} 1 & 0 & 0 \\ 0 & 1 & 0 \\ 0 & 0 & 1\end{pmatrix} \textup{ and } 
 A_2 = \begin{pmatrix} 0 & 1 & 0 \\ 0 & 0 & 1 \\ 0 & 0 & 0\end{pmatrix},$$ and
$\det(A_1 u_1 +A_2 u_2) = u_1^3$. Since $\rank(A_2)=2$, $A_2$ is a fundamental matrix of $Z$.
 \end{example}

We now present a general theorem that characterizes the existence of a fundamental matrix 
for  $m\leq 8$.

\begin{theorem} \label{thm:m=6,7}
For a basis $\{A_1, \ldots, A_t\}$ of $\ker_\RR(Z)$, define $M(u) := \sum_{i=1}^t A_i u_i$, and set 
$d(u) := \det(M(u))$. 

\begin{enumerate}
\item If $d(u)$ is the zero polynomial  then  
$Z$ has a fundamental matrix if and only if 
some $2\times 2$ minor of $M(u)$ is nonzero.

\item If $d(u)$ is a nonzero polynomial that is not a power of a linear form in $u$ 
then $Z$ has a fundamental matrix. 

\item If $d(u) = (b^\top u)^k$ for some $k\geq 1$ and non-zero vector $b \in \RR^t$, 
then $Z$ has a fundamental 
matrix if and only if some $2\times 2$ minor of $M\bigl(u - \frac{b^\top u}{b^\top b} b\bigr)$ is nonzero.
\end{enumerate}

\end{theorem}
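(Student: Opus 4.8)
The plan is to recast everything in terms of the projectivization $\mathcal{L}\subseteq\PP_\RR^{3\times3}$ of $\ker_\RR(Z)$: by the reduction recalled in Section~\ref{sec:fundamental}, $Z$ has a fundamental matrix if and only if $\mathcal{L}$ contains a matrix of rank exactly two. Two elementary observations are used throughout. First, a $3\times3$ matrix has rank at least two if and only if one of its nine $2\times2$ minors is nonzero. Second, since $\RR$ is infinite, a real polynomial in $u_1,\dots,u_t$ is the zero polynomial if and only if it vanishes at every point of $\RR^t$. Together these say: $M(u_0)$ is a fundamental matrix exactly when $d(u_0)=0$ and some $2\times2$ minor of $M$ does not vanish at $u_0$, and the search for such a $u_0$ succeeds exactly when those minors are not all forced to vanish wherever $d$ does. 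I also note that the three cases are exhaustive: $d(u)$ is the zero polynomial, or else (being homogeneous of degree three in $u$, as $\det(M(\lambda u))=\lambda^3\det(M(u))$) it is either not a power of a linear form, or equals $c\,\ell(u)^3$ for a nonzero constant $c$ and a nonzero linear form $\ell$, in which case the real cube root of $c$ lets us write $d(u)=(b^\top u)^k$ with $k=3$ and $b\ne0$.

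For part (1), $d\equiv0$ means every matrix of $\mathcal{L}$ has rank at most two, so $\mathcal{L}$ contains a rank-two matrix if and only if some $M(u)$ fails to have rank $\le1$, i.e.\ some $2\times2$ minor of $M(u)$ is a nonzero polynomial; choosing a non-root of that minor produces the fundamental matrix (the corresponding $u_0$ is nonzero, hence a genuine point of $\PP_\RR^8$), and the converse is immediate.

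For part (2), $d\not\equiv0$ furnishes a $u_0$ with $\det M(u_0)\ne0$, so $\mathcal{L}$ contains a rank-three matrix; moreover if $t=1$ then $d(u)=u_1^3\det(A_1)$ is a power of a linear form or zero, so here $t\ge2$ and $\mathcal{L}$ is positive-dimensional. Since $d(u)=\det(M(u))$ is precisely the determinant restricted to $\mathcal{L}$ in the chosen coordinates, Lemma~\ref{lem:bernd} applies verbatim and yields a real rank-two matrix in $\mathcal{L}$.

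Part (3) is the heart of the matter. With $d(u)=(b^\top u)^k$, a matrix $M(u_0)\in\mathcal{L}$ is singular (equivalently of rank $\le2$) if and only if $b^\top u_0=0$, i.e.\ $u_0\in b^\perp$; hence every rank-two matrix of $\mathcal{L}$ already lies in the smaller subspace $\mathcal{L}':=\{M(u):b^\top u=0\}$, while conversely every matrix of $\mathcal{L}'$ has rank $\le2$. Therefore $\mathcal{L}$ contains a rank-two matrix if and only if $\mathcal{L}'$ does, which — exactly as in part (1), since all of $\mathcal{L}'$ has rank $\le2$ — happens if and only if some $2\times2$ minor is not identically zero on $\mathcal{L}'$. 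Finally, $u\mapsto u-\frac{b^\top u}{b^\top b}b$ is the orthogonal projection of $\RR^t$ onto $b^\perp$, in particular surjective onto $b^\perp$, so $\mathcal{L}'=\{M(u-\frac{b^\top u}{b^\top b}b):u\in\RR^t\}$ and the $2\times2$ minors of $M(u-\frac{b^\top u}{b^\top b}b)$, read as polynomials in $u$, are exactly the pullbacks of the $2\times2$ minors on $\mathcal{L}'$; thus ``not identically zero on $\mathcal{L}'$'' becomes ``some $2\times2$ minor of $M(u-\frac{b^\top u}{b^\top b}b)$ is a nonzero polynomial'', which is the asserted criterion. I expect the only delicate point to be this last transfer — confirming that intersecting with $b^\perp$ and then reparametrizing by the projection neither destroys nor creates rank-two matrices — together with checking the degenerate case $t=1$, where $b^\perp=\{0\}$, $\mathcal{L}'=\{0\}$, all its $2\times2$ minors vanish, and indeed $\mathcal{L}$ is a single rank-three matrix admitting no fundamental matrix.
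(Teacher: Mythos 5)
Your proposal is correct and follows essentially the same route as the paper: part (1) by observing all matrices have rank at most two, part (2) by invoking Lemma~\ref{lem:bernd}, and part (3) by identifying the singular locus with $b^\perp$ and reparametrizing it via the orthogonal projection $u \mapsto u - \frac{b^\top u}{b^\top b} b$. Your additional checks (that the hypotheses of Lemma~\ref{lem:bernd} hold when $d\not\equiv 0$, and the degenerate case $t=1$) are welcome but do not change the argument.
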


\begin{proof}
Note that $M(u)$ is a parametrization of $\ker_\RR(Z)$ and  $d(u)$ is a polynomial in $u$ of degree at most three. 

\begin{enumerate}
\item If $d(u)$ is the zero polynomial, then 
all matrices in $\ker_\RR(Z)$ have rank at most two.
In this case, $Z$ has a fundamental matrix if and only if 
some $2\times 2$ minor of $M(u)$ is a nonzero polynomial in $u$.

\item If $d(u)$ is a nonzero polynomial in $u$, then we factor $d(u)$ and see if it is the cube of a linear form. If it is not, then by Lemma~\ref{lem:bernd}, $Z$ has a fundamental matrix.

\item Suppose  $d(u) = (b^\top u)^k$ for some $k\geq 1$ and non-zero vector $b$. 
Then the set of rank deficient matrices in ${\rm ker}_\RR(Z)$ is
$\mathcal{M} := \left\{ M(u) : u \in b^\perp \right\}$
where, $b^\perp := \{u \in \RR^t\ : b^\top u = 0\}$.
The hyperplane $b^\perp$ consists of all vectors $\,u - \frac{b^\top u}{b^\top b} b \,$ where $ u \in \RR^t$.
Therefore, $\mathcal{M} = \left\{ M\bigl(u - \frac{b^\top u}{b^\top b} b \bigr) : u \in \RR^{t} \right\}$.
As a result, 
$Z$ has a fundamental matrix if and only if some $2\times 2$ minor of $M\bigl(u - \frac{b^\top u}{b^\top b} b\bigr)$ is nonzero.  
\end{enumerate}
\end{proof}

\subsection{Cases: $m \leq 5$}
 
While Theorem~\ref{thm:m=6,7} provides a general existence condition for fundamental matrices for $m \le 8$, 
we now show that for $m\leq 5$ there always exists a fundamental matrix.

\begin{theorem} \label{thm:F exists for m<=5}
Every three-dimensional subspace of $\PP_\RR^{3\times 3}$ contains a rank two matrix. In particular, if $m \leq 5$, then $Z$ has a fundamental matrix.
\end{theorem}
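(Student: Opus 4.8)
The plan is to reduce the whole statement to an assertion about the restriction of the determinant to a four-dimensional linear subspace, and then run a three-way case analysis exactly parallel to Theorem~\ref{thm:m=6,7}, checking in each case that the obstruction there cannot occur once the subspace is at least four-dimensional. First I would dispose of the ``in particular'' clause: by \eqref{eqn:rank Z is m} we may assume $m=\rank(Z)$, so $\ker_\RR(Z)$ is a linear subspace of $\RR^9$ of dimension $9-m\ge 4$; it therefore contains a four-dimensional linear subspace, i.e.\ a three-dimensional projective subspace of $\PP_\RR^8=\PP_\RR^{3\times 3}$, and any rank-two matrix in it is a fundamental matrix for $Z$ since $\mathcal F$ is precisely the set of rank-two $f$ with $Zf=0$. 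Thus it suffices to prove the first sentence. Let $\mathcal L\subseteq\RR^{3\times 3}$ be the four-dimensional linear subspace whose projectivization is the given projective $3$-space, fix a basis $A_1,\dots,A_4$, and set $M(u)=\sum_{i=1}^4 u_iA_i$ and $d(u)=\det M(u)$. Since the entries of $M(u)$ are linear forms in $u$, $d$ is either identically zero or a homogeneous cubic in $u\in\RR^4$, and I split into three cases accordingly.

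\emph{Case (i): $d\equiv 0$.} Then every matrix in $\mathcal L$ has rank at most two. If all of them had rank at most one, Flanders (Lemma~\ref{lem:flanders}) would force $\dim\mathcal L\le 1\cdot\max\{3,3\}=3$, contradicting $\dim\mathcal L=4$; hence $\mathcal L$ contains a matrix of rank exactly two. \emph{Case (ii): $d\not\equiv 0$ and $d$ is not a power of a linear form.} Any $M(u)$ with $d(u)\ne 0$ has rank three, so $\mathcal L$ is a positive-dimensional subspace containing a rank-three matrix and with $\det$ not a power of a linear form; Lemma~\ref{lem:bernd} then yields a rank-two matrix in $\mathcal L$ directly.

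\emph{Case (iii): $d=(b^\top u)^k$ for a nonzero $b\in\RR^4$.} Since $d$ is a nonzero homogeneous cubic we have $k=3$, and here Lemma~\ref{lem:bernd} gives nothing. The rank-deficient matrices in $\mathcal L$ are exactly $V:=M(b^\perp)$, and because $M$ is injective (the $A_i$ are linearly independent) $\dim V=\dim b^\perp=3$. Suppose, for contradiction, that every matrix in $V$ has rank at most one. Then $V$ meets the hypotheses of Meshulam's theorem (Lemma~\ref{lem:meshulam}) with $n=3$, $r=1$, $\dim V=3=rn$, so $V=W\otimes\RR^3$ or $V=\RR^3\otimes W$ for a one-dimensional $W=\langle w\rangle$; one of these spaces is the transpose of the other, and replacing $\mathcal L$ by $\mathcal L^\top$ if necessary (which preserves all ranks and the determinant) we may assume $V=\{wv^\top:v\in\RR^3\}$. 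Since $\dim\mathcal L>\dim V$, choose $A\in\mathcal L\setminus V$; then $\det A\ne 0$, so $A$ is invertible and $A^{-1}w\ne 0$. Picking $v$ with $v^\top A^{-1}w=-1$, the Matrix Determinant Lemma (Lemma~\ref{lem:matrix determinant}) gives $\det(A+wv^\top)=(1+v^\top A^{-1}w)\det A=0$; but $A+wv^\top\in\mathcal L$ and $A+wv^\top\notin V$ (because $A\notin V$ while $wv^\top\in V$), contradicting that $V$ is the entire rank-deficient locus of $\mathcal L$. Hence $V$, and so $\mathcal L$, contains a matrix of rank exactly two, completing the proof.

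I expect Case (iii) to be the main obstacle: it is precisely the case that Lemma~\ref{lem:bernd} cannot handle (the determinant is a perfect cube), and it forces us to combine Meshulam's classification of linear spaces of rank-$\le 1$ matrices with the rank-one perturbation formula. Two small points deserve care in the write-up: verifying that $\dim V=3$ \emph{exactly}, since Meshulam's theorem requires the dimension to equal $rn$ rather than merely be bounded by it (this is where injectivity of $M$ is used), and justifying the reduction to the ``fixed column space'' form of $V$ by passing to $\mathcal L^\top$.
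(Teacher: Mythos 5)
Your proof is correct and takes essentially the same route as the paper: Flanders' bound to rule out an all-rank-one subspace, Lemma~\ref{lem:bernd} to reduce to the case where the restricted determinant is a power of a linear form, and then Lemma~\ref{lem:meshulam} combined with the Matrix Determinant Lemma to kill that case. The difference is only organizational: the paper normalizes a basis so that $\det = \lambda_4^3$ and concludes that $A_1,A_2,A_3$ would be linearly dependent, whereas you run the same computation as a case split and exhibit a singular matrix of $\mathcal{L}$ outside the rank-deficient locus $M(b^\perp)$.
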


\begin{proof} 
Suppose $\mathcal{L}$ is a three-dimensional subspace in $\PP_\RR^{3 \times 3}$ generated by the basis $\{A_1,\ldots,A_4\}$, and suppose $\mathcal{L}$ does not contain a rank two matrix.  Since the dimension of $\mathcal{L}$ as a linear subspace is four, by 
applying Lemma~\ref{lem:flanders} with $m=n=3$ and $r=1$, we see that $\mathcal{L}$ cannot be contained in the variety of rank one matrices. Therefore, we may assume that 
$A_4$ has rank three. Since $\mathcal{L}$ is assumed to have no matrices of rank two, by Lemma~\ref{lem:bernd} we also have that 
\begin{align} \label{eq:cubic}
\det(\lambda_1 A_1 + \lambda_2 A_2 +\lambda_3 A_3 +\lambda_4 A_4) 
=  (a_1 \lambda_1 +a_2\lambda_2+a_3\lambda_3 +a_4\lambda_4)^3
\end{align}
where $\lambda_1,\cdots,\lambda_4$ are variables. Note that $a_4 \neq 0$ since otherwise, choosing 
$\lambda_1 = \lambda_2 = \lambda_3 = 0$ and $\lambda_4 = 1$ we get $\det(A_4) = 0$ which is impossible.

By a change of coordinates, we may assume that 
\begin{align} \label{eq:change of coords}
\text{$\det(\lambda_1 A_1 + \lambda_2 A_2 +\lambda_3 A_3 +\lambda_4 A_4) = \lambda_4^3$,}
\end{align}
and in  particular, $\det(A_4)=1$. Indeed, 
consider
\begin{align}
\wt{A}_1  := A_1 - \f{a_1}{a_4} A_4, \ \wt{A}_2  := A_2  -\f{a_2}{a_4} A_4, \
\wt{A}_3  := A_3 - \f{a_3}{a_4} A_4, \ \wt{A}_4  := \f{1}{a_4} A_4
\end{align}
which also form a basis of  $\mathcal{L}$. Then using \eqref{eq:cubic} with the variables $\eta_1,\eta_2,\eta_3,\eta_4$, we obtain
\begin{align*}
 & \ \det(\eta_1 \wt{A}_1+ \eta_2 \wt{A}_2 + \eta_3 \wt{A}_3 +\eta_4 \wt{A}_4) \\
=&  \ \det \lt( \eta_1 A_1 + \eta_2 A_2 + \eta_3 A_3 + \f{\lt( \eta_4 - \eta_1a_1- \eta_2 a_2- \eta_3 a_3 \rt)}{a_4} A_4 \rt) \\
= & \ \lt( a_1 \eta_1 + a_2 \eta_2 + a_3 \eta_3 + \lt( \eta_4 - \eta_1a_1- \eta_2 a_2- \eta_3 a_3 \rt) \rt)^3 = \eta_4^3, 
\end{align*}
which is the desired conclusion.

Setting $\lambda_4 = 0$ in \eqref{eq:change of coords} we get 
$\det(\lambda_1A_1 + \lambda_2A_2+\lambda_3A_3) = 0 $. Hence,
${\rm span}\{A_1,A_2,A_3\}$ consists only of rank one matrices since there are no rank two matrices in $\mathcal{L}$.
Therefore, by Lemma~\ref{lem:meshulam} with $n=3$ and $r=1$, up to taking transposes of all $A_i$, 
there are column vectors $u,v_1,v_2,v_3\in \PP_\RR^2$ such that $A_j = uv_j^\top$ for all $j=1,2,3$.

Now setting $\lambda_4=1$,  by the Matrix Determinant Lemma, we have 
\begin{align*}
1& = \det(\lambda_1 A_1 + \lambda_2 A_2 +\lambda_3 A_3 + A_4)\\
& = \det(A_4 + u(\lambda_1 v_1^\top + \lambda_2 v_2^\top + \lambda_3 v_3^\top))\\
& = 1+(\lambda_1 v_1^\top + \lambda_2 v_2^\top + \lambda_3 v_3^\top)A_4^{-1}u.
\end{align*}
Hence $(\lambda_1 v_1^\top + \lambda_2 v_2^\top + \lambda_3 v_3^\top)A_4^{-1}u $ is the zero polynomial, and so $A_4^{-1}u$ is a non-zero vector orthogonal to ${\rm span}\{v_1,v_2,v_3\}$. This means that 
$v_1,v_2,v_3$ are linearly dependent, and so are $A_1,A_2,A_3$, which is impossible.
This completes the proof of the first statement.

If $m\leq 5$, then $\rank(Z)\leq 5$ (cf.~\eqref{eqn:rank Z is m}) and so $\ker_\RR(Z)$ is a subspace in $\PP_\RR^8$ of dimension at least three. 
By the first statement of the theorem, $Z$ has a fundamental matrix. 
\end{proof}

Note that 
when $m \leq 4$ there is a simpler proof (Appendix~\ref{appendix:fundamental}) for the existence of a fundamental matrix associated to the point correspondences, but it does not extend to the case of $m=5$.

\subsection{Comments}
As far as we know, the seven and eight point algorithms are the only general methods for checking the existence of a fundamental matrix. They work by first computing the matrices in $\ker_\CC(Z) \cap \mathcal{R}_2$ and then 
checking if there is a real matrix of rank two in this collection.  

While such an approach might decide the existence of a fundamental matrix for a given input $X$ and $Y$, it does not shed light on the 
structural requirements of $X$ and $Y$ to have a fundamental matrix. The goal of this paper is to understand the existence of epipolar matrices in terms of the input data.





When the input points $x_i$ and $y_i$ are rational, the results in this section also certify the existence of a fundamental matrix by exact rational arithmetic in polynomial time. The only calculation that scales with $m$ is the computation of a basis for $\ker_\RR(Z)$, which can be done in polynomial time using Gaussian elimination.

\section{Essential Matrices}
\label{sec:essential}
We now turn our attention to calibrated cameras and essential matrices. 
The set of essential matrices is the set of real $3 \times 3$ matrices of rank two with two equal (non-zero) singular values~\cite{maybank-faugeras}. 
In particular, all essential matrices are fundamental matrices and hence, 
contained in $\mathcal{R}_2 \setminus \mathcal{R}_1$.
We denote the set of 
essential matrices by
\begin{equation}
\label{eq:singvalues}
	\mathcal{E}_\RR = \{e \in \PP^8_\RR : \sigma_1(E) = \sigma_2(E)
	\,\,\,{\rm and} \,\,\, \sigma_3(E) = 0\},
\end{equation}
where $\sigma_i(E)$ denotes the $i^{\rm{th}}$ singular value of the matrix $E$. 
Demazure~\cite{demazure} showed that 
\begin{align} \label{E_describe}
\mathcal{E}_\RR \,= \bigl\{e \in \PP^8_\RR : p_j(e) = 0 \,\,\, {\rm for} \,\,\, j = 1, \hdots, 10 \bigr\},
\end{align}
where the $p_j$'s are homogeneous polynomials of degree three defined as
\begin{align}
\begin{pmatrix}
p_1 & p_2 & p_3 \\
p_4 & p_5 & p_6\\
p_7 & p_8 & p_9
\end{pmatrix} &:=  2 EE^\top E - {\rm Tr}(EE^\top) E,  
\textup{ and }\\
p_{10} &:= \det(E). 
\end{align}
Therefore, $\mathcal{E}_\RR$ is a real projective variety in $\PP_\RR^8$.

Passing to the common complex roots of the cubics $p_1, \ldots, p_{10}$, we get
\begin{align}
\mathcal{E}_\CC := \{e \in \PP^8_\CC : p_j(e) = 0, \, \forall j = 1, \hdots, 10\}.
\end{align}
This is an irreducible projective variety
with $\dim(\mathcal{E}_\CC) = 5$ and $\degree(\mathcal{E}_\CC) = 10$ (see~\cite{demazure}), and $\mathcal{E}_\RR = \mathcal{E}_\CC \cap \PP_\RR^8$.
See \cite{MaybankBook} for many interesting facts about $\mathcal{E}_\CC$ and $\mathcal{E}_\RR$ and their role in reconstruction problems in multiview geometry.

As before, our data consists of  $m$ point correspondences, which are now normalized image coordinates. For simplicity we will denote them as $\{ (x_i, y_i), \,\,i=1,\ldots,m\}$ instead of $\{ (\tilde{x}_i, \tilde{y}_i), \,\,i=1,\ldots,m\}$.

As in the uncalibrated case,  we can write the epipolar constraints  
(cf. \eqref{eq:epipolar for E}) as $Ze = 0$ where $e\in \mathcal{E}_\RR$, and 
$Z$ has an essential matrix if and only if
\begin{align} 
\ker_\RR(Z) \,\cap\, \mathcal{E}_\RR\,\, \neq \,\, \emptyset.
\end{align}
Hence the existence of an essential matrix for a given $Z$
is equivalent to the intersection of a subspace with a fixed real projective variety being non-empty. This formulation can also be found in \cite[Section 5.2]{MaybankBook}.

\subsection{Cases: $m=8,9$}
As in the previous section it is easy to settle the existence of $E$ for $m=8,9$.
If $m=8$, then the subspace $\ker_\RR(Z)\subseteq \PP_\RR^8$ is a point $a$
in $\PP_\RR^8$, and $Z$ has an essential 
matrix if and only if $A$ satisfies the conditions of \eqref{eq:singvalues} or \eqref{E_describe}.  If $m = 9$, then $\ker_\RR(Z)\subseteq \PP_\RR^8$ is empty, and $Z$ has no essential matrix.

\subsection{Cases: $5 \leq m \leq 7$}
The ``minimal problem'' for essential matrices is the case of $m=5$ where, by Definition~\ref{def:degree}, $\mathcal{E}_\CC \cap \ker_\CC(Z)$ is a finite set of points. Since $\degree(\mathcal{E}_\CC) = 10$, generically we expect ten distinct complex points in this intersection. An essential matrix exists for $Z$ if and only if one of these points is real. There can be ten distinct real points in $\mathcal{E}_\CC \cap \ker_\CC(Z)$ as shown in \cite[Theorem 5.14]{MaybankBook}. On the other extreme,  it can also be that no point in $\mathcal{E}_\CC \cap \ker_\CC(Z)$  is real as we show below. 

\begin{example}\label{ex:all complex Es}
We verified using \verb+Maple+ that the following set of five point correspondences has no essential matrix. $$ 
X = \left(\begin{array}{rrr}
3 & 0 & 1 \\
9 & 1 & 1 \\
1 & 2 & 1 \\
8 & 8 & 1 \\
4 & 8 & 1
\end{array}\right),\quad
Y = \left (\begin{array}{rrr}
2 & 0 & 1 \\
5 & 4 & 1 \\
 9 & 6 & 1 \\
 2 & 5 & 1 \\
 1 & 4 & 1 \\
\end{array}\right).
$$ 
None of the ten points in
$\ker_\CC(Z) \cap \mathcal{E}_\CC$ are real. 
\end{example}

As we have mentioned earlier, the existence of an essential matrix is equivalent to existence of a real point in the intersection $\ker_\CC(Z) \cap \mathcal{E}_\CC$. In general, this is a hard question which falls under the umbrella of {\em real algebraic geometry}.

The reason we were able to give a general existence result for fundamental matrices is because we were able to exploit the structure of the set of rank 2 matrices (Lemma~\ref{lem:bernd}). We believe that a general existence result for essential matrices would require a similar result about the variety of essential matrices.  One that still eludes us, and therefore, we are unable to say more about the existence of essential matrices for the case $5 \leq m \leq 7$. 

In theory, the non-existence of a real solution to a system of polynomials can be characterized by the {\em real Nullstellensatz} \cite{MarshallBook} and checked degree by degree via {\em semidefinite programming} \cite{boyd-vandenberghe}. Or given a $Z$ we could solve the Demazure cubics together with the linear equations cutting out $\ker_\RR(Z)$ and check if there is a real solution among the finitely many complex solutions~\cite{nister,stewenius_thesis}. In both of these approaches, it is a case by case computation for each instance of $Z$ and will not yield a characterization of those $Z$'s for which there is an essential matrix.

\subsection{Cases: $m \leq 4$}
We now consider the cases of $m \leq 4$ where $\mathcal{E}_\CC \cap \ker_\CC(Z)$ is infinite and the conventional wisdom is that an essential matrix always exists. It turns out that an essential matrix does indeed exist when $m \leq 4$. 
Again, such a result does not follow from dimension counting for complex varieties since we have to exhibit the existence of a real matrix in $\mathcal{E}_\CC \cap \ker_\CC(Z)$.

When $m \leq 3$, there is a short proof that $Z$ always has an essential matrix using the 
fact that an essential matrix can be written in the form 
$E = [t]_\times R$ where $t$ is a nonzero vector in $\RR^3$ and $R \in \textup{SO}(3)$.

\begin{theorem}
If $m\leq 3$ then $Z$ has an essential matrix.
\end{theorem}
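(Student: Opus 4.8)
The plan is to reduce the statement to a dimension-counting argument on the real orthogonal group, exploiting the explicit parametrization $E = [t]_\times R$ of essential matrices. When $m \leq 3$ we have $\rank(Z) \leq 3$ (by \eqref{eqn:rank Z is m}), so $\ker_\RR(Z) \subseteq \PP_\RR^8$ has projective dimension at least $8 - 3 = 5$; equivalently, the epipolar constraints $Ze = 0$ impose at most three linear conditions on $e \in \PP_\RR^8$. First I would fix an arbitrary rotation $R \in \textup{SO}(3)$ and ask whether there is a nonzero $t \in \RR^3$ with $Z(\,[t]_\times R\,) = 0$, where here I abuse notation and write $Z(E)$ for $Z$ applied to the vectorization of $E$. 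Since $t \mapsto [t]_\times R$ is linear in $t$, the condition $Z([t]_\times R) = 0$ is a homogeneous linear system in the three unknowns $t_1, t_2, t_3$ with at most three equations (one for each correspondence, namely $y_i^\top [t]_\times R \, x_i = 0$).

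The key observation is that the single scalar equation $y_i^\top [t]_\times R\, x_i = 0$ can be rewritten using $[t]_\times w = t \times w$: indeed $y_i^\top [t]_\times (R x_i) = y_i^\top (t \times R x_i) = t^\top (R x_i \times y_i) = -\,t^\top (y_i \times R x_i)$, so the $i$-th constraint is simply $t^\top (y_i \times R x_i) = 0$, i.e. $t$ is orthogonal to the vector $y_i \times R x_i \in \RR^3$. Hence, for $m \leq 3$ correspondences, $t$ must be orthogonal to at most three given vectors in $\RR^3$; such a nonzero $t$ always exists when $m \leq 2$ trivially, and when $m = 3$ it exists as long as the three vectors $y_1 \times R x_1, y_2 \times R x_2, y_3 \times R x_3$ are linearly dependent, which can always be arranged — this is the crux. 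To make $E = [t]_\times R$ a genuine essential matrix we also need $t \neq 0$ (automatic from the above) and no further condition, since $[t]_\times R$ with $t \neq 0$ and $R \in \textup{SO}(3)$ is by definition an essential matrix (it has singular values $\|t\|, \|t\|, 0$).

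So the proof reduces to: show there exists $R \in \textup{SO}(3)$ making $\{y_1 \times R x_1,\, y_2 \times R x_2,\, y_3 \times R x_3\}$ linearly dependent. I would argue this by a continuity/parity argument. Consider the map $R \mapsto \det\!\big(\,[\,y_1 \times R x_1 \mid y_2 \times R x_2 \mid y_3 \times R x_3\,]\,\big)$ from the connected manifold $\textup{SO}(3)$ to $\RR$. If this map is not identically zero, I would try to show it changes sign — for instance by noting that replacing $R$ by a rotation that sends $x_i \mapsto -x_i$ direction-wise is not available inside $\textup{SO}(3)$, so instead I would compose with a path in $\textup{SO}(3)$ connecting $R$ to a conjugate/rotated copy and track the sign of the determinant, or alternatively exhibit two explicit rotations giving determinants of opposite sign using the homogenized form $x_i = (x_{i1}, x_{i2}, 1)$. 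By the intermediate value theorem on a connecting arc in $\textup{SO}(3)$, the determinant vanishes somewhere, giving the desired $R$ and hence $t$.

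The main obstacle I anticipate is precisely the last step: proving that the determinant function on $\textup{SO}(3)$ either vanishes identically or attains both signs. One clean way around it is to use the special structure of the homogenized points (third coordinate $1$): for a rotation $R$ by angle $\pi$ about a generic axis, the vectors $Rx_i$ are "reflected" enough that one can explicitly compare $\det[\,y_i \times R x_i\,]$ at $R = I$ and at such an $R$ and see opposite signs in general position, then handle the degenerate (identically zero, or already-dependent) configurations separately — in those cases the required $t$ exists for free. I would present the continuity argument as the main line and dispatch the degenerate configurations with a short remark.
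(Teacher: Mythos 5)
Your reduction is the right one, and it matches the paper's starting point: since $y_i^\top [t]_\times R x_i = t^\top(y_i \times Rx_i)$, an essential matrix $[t]_\times R$ exists as soon as you can find $R \in \textup{SO}(3)$ and a nonzero $t$ orthogonal to the (at most three) vectors $y_i \times Rx_i$. But the crux of your argument --- producing an $R$ that makes $\{y_1 \times Rx_1,\, y_2 \times Rx_2,\, y_3 \times Rx_3\}$ linearly dependent --- is exactly the step you do not carry out. The proposed route, showing that $R \mapsto \det\bigl(\,y_1 \times Rx_1 \mid y_2 \times Rx_2 \mid y_3 \times Rx_3\,\bigr)$ either vanishes identically or changes sign on $\textup{SO}(3)$ and invoking the intermediate value theorem, is left as a sketch ("I would try to show it changes sign\dots"), and you yourself flag it as the main obstacle. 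As written there is no proof that the determinant attains both signs for an arbitrary configuration, nor a workable description of the path or the pair of rotations that would exhibit the sign change, so the argument has a genuine gap precisely at the point where all the content lies.

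The missing idea is much simpler than a parity argument, and it is how the paper proceeds: choose $R$ so that $Rx_1 \sim y_1$ (always possible, since the homogenized $x_1$ and $y_1$ are nonzero vectors in $\RR^3$ and a rotation can align any direction with any other). Then $y_1 \times Rx_1 = 0$, so the first constraint on $t$ is vacuous, and $t$ only needs to be orthogonal to the two vectors $y_2 \times Rx_2$ and $y_3 \times Rx_3$; a nonzero such $t$ exists because two vectors in $\RR^3$ never have full orthogonal complement zero. This choice of $R$ collapses your "three vectors must be dependent" condition to something automatic (one of the three vectors is made zero), and no continuity or genericity analysis is needed, nor any separate treatment of degenerate configurations. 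If you want to salvage your own outline, the cleanest fix is to replace the IVT step by this explicit choice of $R$; otherwise you would have to actually prove the sign-change claim, which is harder than the theorem itself.
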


\begin{proof}
Without loss of generality we assume $m=3$.
Choose a rotation matrix $R$ so that $y_1 \sim Rx_1$. 
Then consider $t\in \RR^3\setminus \{0\}$ which is orthogonal to both $y_2\times Rx_2$ and $y_3 \times Rx_3$. 
Now check that for each $i=1,2,3$, $y_i^\top [t]_\times R x_i = 0$ and hence 
$[t]_\times R$ is an essential matrix for $Z$. It helps to recall that $y_i^\top [t]_\times R x_i  = t^\top (y_i \times Rx_i)$. 
\end{proof}

The above argument does not extend to $m=4$.
Our main result in this section is Theorem~\ref{thm:rank4} which proves the existence of $E$ when $m = 4$ under the mild assumption that all the $x_i$'s (respectively, $y_i$'s)  are distinct. This result will need the following key lemma which is a consequence of Theorems 5.19 and 5.21 in \cite{ma-et-al}.

\begin{lemma} \label{lem:relation}
If there is a matrix $H\in \RR^{3\times 3}$ of rank at least two such that for each $i$, either $y_i \sim Hx_i$ 
or $Hx_i=0$, then $Z$ has an essential matrix.
\end{lemma}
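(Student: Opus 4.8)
The plan is to reduce the statement to the cited theorems of Ma et al.~\cite{ma-et-al} (Theorems 5.19 and 5.21 there), which characterize when a matrix $H$ is compatible with a rotation-and-translation pair in the sense needed for reconstruction. First I would dispose of the degenerate columns: re-index so that $Hx_i = 0$ for $i$ in some subset $\sigma$ and $y_i \sim Hx_i \neq 0$ for $i \notin \sigma$. The indices in $\sigma$ impose no real constraint, because if $E = [t]_\times R$ is any essential matrix then $y_i^\top E x_i = t^\top(y_i \times Rx_i)$, and we will have freedom left over to accommodate these; more precisely, the $x_i$ with $Hx_i = 0$ all lie in $\ker H$, which is at most one-dimensional since $\rank(H) \geq 2$, so they correspond (projectively) to at most one point of $\PP^2_\RR$. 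The heart of the matter is the indices $i \notin \sigma$ where $y_i \sim Hx_i$; here I would invoke the Ma et al.\ result to produce a rotation $R$ and a nonzero $t$ such that $[t]_\times R$ agrees with $H$ on the relevant epipolar data, i.e.\ $y_i^\top [t]_\times R x_i = 0$ for those $i$.

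Concretely, the argument I envisage is: a homography $H$ of rank $\geq 2$ satisfying $y_i \sim H x_i$ for several correspondences is exactly the kind of object that appears in the ``planar'' or ``degenerate'' cases of two-view geometry, and Theorems 5.19 and 5.21 of \cite{ma-et-al} say that such an $H$ decomposes (after suitable normalization) as $H \sim R + t n^\top$ for some rotation $R$, translation $t$, and plane normal $n$ — or, in the rank-two boundary case, that $H$ is compatible with a pair $(R,t)$ via $E = [t]_\times R$. Given such a decomposition, for any $x$ with $y \sim Hx$ we get $y \sim (R + tn^\top)x = Rx + (n^\top x)t$, hence $y$ lies in the plane spanned by $Rx$ and $t$, which is precisely the statement $y^\top [t]_\times R x = 0$ (since $[t]_\times R x = t \times Rx$ is normal to that plane). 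So $E := [t]_\times R$ satisfies the epipolar constraint for every $i \notin \sigma$. For the finitely many (at most one, projectively) leftover directions in $\ker H$, I would argue that the decomposition furnished by \cite{ma-et-al} can be chosen so that $t$ is orthogonal to $y_i \times Rx_i$ as well — there is enough freedom in the choice of $t$ (a one-parameter family, or a choice of sign/scale) to kill one additional linear condition — or, failing that, observe that $x_i \in \ker H$ forces a collinearity among the remaining points that lets us apply the construction of the $m \leq 3$ theorem to the residual data.

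The main obstacle I anticipate is precisely this last bookkeeping: verifying that the pair $(R,t)$ extracted from \cite{ma-et-al} can be made to simultaneously satisfy the epipolar constraint on the correspondences with $y_i \sim Hx_i$ \emph{and} on those with $Hx_i = 0$. The former is essentially automatic from the homography decomposition; the latter requires checking that the ``degenerate'' correspondences are not in conflict with the plane/translation data, and this is where one must be careful about rank-two versus rank-three $H$ and about the dimension of $\ker H$. I expect the cleanest route is to treat $\rank(H) = 3$ and $\rank(H) = 2$ separately: when $\rank(H)=3$ there are no degenerate columns and the lemma is immediate from the homography decomposition; when $\rank(H)=2$, $\ker H$ is a line, all $x_i$ with $Hx_i=0$ collapse to a single projective point, and one picks $t$ within the admissible family to annihilate the single extra constraint $t^\top(y_i \times Rx_i)=0$. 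Everything else is routine linear algebra once the decomposition from \cite{ma-et-al} is in hand.
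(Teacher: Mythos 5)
Your overall route coincides with the paper's: the paper gives no argument for this lemma beyond citing Theorems 5.19 and 5.21 of Ma et al., i.e.\ precisely the homography--essential-matrix relationship and the decomposition $H \sim R + tn^\top$ that you invoke. The core of your sketch is sound: after rescaling $H$ so that its middle singular value is $1$ (this is exactly where $\rank(H)\geq 2$ is needed, since it guarantees $\sigma_2(H)>0$), the decomposition $H = R + tn^\top$ exists, and $E := [t]_\times R$ satisfies $y_i^\top E x_i=0$ for every index with $y_i \sim Hx_i$, just as you argue.

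The one place your proposal wobbles is the indices with $Hx_i=0$, which you yourself flag as the main obstacle. You propose to treat them as an extra linear condition on $t$ and to use ``enough freedom in the choice of $t$''; that argument is not available as stated, because the decompositions of a fixed $H$ form a finite set (up to four, modulo sign conventions), not a one-parameter family. But no freedom is needed: from $H = R + tn^\top$ one gets $[t]_\times H = [t]_\times R = E$, so $Hx_i=0$ immediately forces $Ex_i = [t]_\times H x_i = 0$, and the epipolar constraint holds for any $y_i$ whatsoever; equivalently, $Hx_i=0$ gives $Rx_i = -(n^\top x_i)\,t$, so $t\times Rx_i=0$. The only residual caveat is the degenerate decomposition with $t=0$, which occurs exactly when $H$ is a scalar multiple of a rotation; in that case $H$ is invertible, there are no indices with $Hx_i=0$, and one may take $E=[t]_\times H$ for an arbitrary $t\neq 0$. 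With these two observations your argument closes, and it is the same argument the paper intends by its citation.
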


\begin{theorem} \label{thm:rank4}
If $m=4$ and all the $x_i$'s are distinct points and all the $y_i$'s are distinct points for 
 $i=1,\ldots,4$, then $Z$ has an essential matrix.
\end{theorem}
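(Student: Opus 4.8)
The plan is to reduce the problem, wherever possible, to Lemma~\ref{lem:relation}: it suffices to produce a real $3\times 3$ matrix $H$ of rank at least two such that for every $i$ either $y_i \sim Hx_i$ or $Hx_i = 0$. I will also use the \emph{transposed} form of the lemma: since $\sigma_j(E)=\sigma_j(E^\top)$ and $y_i^\top E x_i = x_i^\top E^\top y_i$, a matrix is essential iff its transpose is, and $(x_i,y_i)$ admits an essential matrix iff $(y_i,x_i)$ does; so it is enough to find such an $H$ for one of the two orderings. After homogenizing, three points being collinear in $\RR^2$ is the same as the three vectors being linearly dependent in $\RR^3$, and since the $x_i$ (resp.\ $y_i$) are distinct, any three collinear ones satisfy a linear dependence all of whose coefficients are nonzero — a fact I will use repeatedly. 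I then split into cases according to how many of the $x_i$, and of the $y_i$, are collinear.

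If no three of the $x_i$ and no three of the $y_i$ are collinear, then Lemma~\ref{lem:non-collinear} with $n=2$ gives an invertible $H$ with $Hx_i \sim y_i$ for all $i$, and Lemma~\ref{lem:relation} finishes (rank $3\ge 2$). At the opposite extreme, if all four $x_i$ lie on a line with $2$-plane $L$ and all four $y_i$ lie on a line with $2$-plane $M$, I would build an essential matrix directly: pick $R\in\textup{SO}(3)$ with $R(L)=M$ and any $t\in M\setminus\{0\}$, and set $E:=[t]_\times R$; then $E$ has singular values $|t|,|t|,0$, and $y_i^\top E x_i = t^\top(Rx_i\times y_i)=0$ because $Rx_i,y_i\in M$ force $Rx_i\times y_i\perp M$.

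The intermediate cases are handled either by a line-to-line homography extended to rank $\ge 2$, or by a rank-two map with a prescribed kernel. If $x_1,x_2,x_3$ lie on a line $\ell$ (with $2$-plane $L$), $y_1,y_2,y_3$ lie on a line $m$ (with $2$-plane $M$), and $x_4\notin\ell$, let $H$ restrict on $L$ to the linear lift of the projective isomorphism $\PP(L)\to\PP(M)$ sending $[x_i]\mapsto[y_i]$ for $i=1,2,3$, and extend to $\RR^3=L\oplus\RR x_4$ by $Hx_4:=y_4$; then $\rank H\ge\dim M=2$ and $Hx_i\sim y_i$ for all $i$ (the case where all four $x_i$ are collinear but not all $y_i$, or vice versa, is similar). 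If instead some three points are collinear in one view while the matching three in the other are \emph{not} (say $x_1,x_2,x_3$ collinear but $y_1,y_2,y_3$ independent), one checks that a rank-$\ge2$ homography relating all four correspondences cannot exist — writing $\sum\mu_i x_i=0$ with all $\mu_i\ne0$ forces $Hx_1=Hx_2=Hx_3=0$, hence $\rank H\le1$ — so I pass to the transposed problem. When $y_1,\dots,y_4$ are in general position, set $\ker H':=\RR y_4$ and $H'y_i:=c_i x_i$ for $i=1,2,3$; writing $y_4=\sum\beta_i y_i$ with all $\beta_i\ne0$ and $\sum\mu_i x_i=0$ with all $\mu_i\ne0$, the choice $c_i:=\mu_i/\beta_i$ makes $H'y_4=0$, gives $\rank H'=\dim L=2$, and $H'y_i\sim x_i$ or $0$ for every $i$, so the transposed Lemma~\ref{lem:relation} applies.

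The one configuration not covered by either form of Lemma~\ref{lem:relation} is when exactly three of the $x_i$ are collinear and exactly three of the $y_i$ are collinear but the two collinear triples share only two indices — say $x_1,x_2,x_3$ lie on $\ell$ and $y_1,y_2,y_4$ lie on $m$; a short computation shows that in this case \emph{no} rank-$\ge2$ matrix $H$ (in either ordering) meets the hypothesis of Lemma~\ref{lem:relation}. Here I would construct $E=[t]_\times R$ directly: the epipolar constraints $t^\top(Rx_i\times y_i)=0$ are linear in $t$, so it suffices to find $R\in\textup{SO}(3)$ making the four vectors $Rx_i\times y_i$ span a subspace of dimension at most two. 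I would take $R$ in the one-parameter family of rotations $R_\theta$ with $R_\theta x_1\sim y_1$ (killing the $i=1$ constraint) and then show that the scalar function $\theta\mapsto\det[\,R_\theta x_2\times y_2\mid R_\theta x_3\times y_3\mid R_\theta x_4\times y_4\,]$, a trigonometric expression, has a real zero — leveraging the collinearity of $x_1,x_2,x_3$ and of $y_1,y_2,y_4$ to pin down its structure (e.g.\ a parity/degree argument on the associated polynomial in $\tan(\theta/2)$). This last step is where I expect essentially all of the genuine difficulty of the theorem to lie; everything else is a finite case check built on Lemma~\ref{lem:non-collinear}, Lemma~\ref{lem:relation}, and explicit linear algebra.
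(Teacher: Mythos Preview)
Your case decomposition and the handling of the ``easy'' cases are essentially the same as the paper's: you use Lemma~\ref{lem:non-collinear} for the generic case, a line-to-line projective isomorphism for the ``same collinear triple'' case, and a rank-two map with prescribed kernel (via the transposed problem) for the case where one triple is collinear and the matching triple is independent. These are correct and match the paper's Cases~1--3 up to presentation.

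The genuine gap is in your ``hard case'' (exactly three $x_i$ collinear and exactly three $y_i$ collinear, with the two triples sharing only two indices --- the paper's Case~4). You correctly observe that neither direction of Lemma~\ref{lem:relation} can be used here, but your proposed fix --- restricting to rotations $R_\theta$ with $R_\theta x_1\sim y_1$ and finding a real zero of $\det[R_\theta x_2\times y_2\mid R_\theta x_3\times y_3\mid R_\theta x_4\times y_4]$ --- is not completed, and the suggested parity/degree argument is unlikely to work as stated: after the substitution $s=\tan(\theta/2)$ the numerator has degree six (even), so odd-degree reasoning does not apply, and there is no evident sign change forced by the collinearity hypotheses. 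You yourself flag this as ``where essentially all of the genuine difficulty lies,'' and indeed it is not resolved.

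The paper avoids this analytic difficulty entirely by an explicit construction. In your labeling ($x_1,x_2,x_3$ on a line with $2$-plane $P_X$, and $y_1,y_2,y_4$ on a line with $2$-plane $P_Y$), the key move is \emph{not} to send $x_1$ to $y_1$, but to pick an orthogonal $R$ with $R(P_X)=P_Y$ and $R x_3 \sim y_4$ (i.e.\ match the index that lies only in the $x$-triple to the index that lies only in the $y$-triple), and then set $E=[y_4]_\times R$. Then $y_4^\top E x_4=0$ because $y_4^\top[y_4]_\times=0$; $y_3^\top E x_3=0$ because $E x_3=[y_4]_\times R x_3\sim[y_4]_\times y_4=0$; and for $i=1,2$ one has $x_i\in P_X$, hence $R x_i\in P_Y=\mathrm{span}\{y_i,y_4\}$, so $y_i^\top[y_4]_\times R x_i=(y_i\times y_4)^\top R x_i=0$. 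No parameter search is needed. This single idea --- align the two planes by a rotation, and take $t$ to be the ``unmatched'' $y$-point --- is what your proposal is missing.
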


\begin{figure*}[t!]
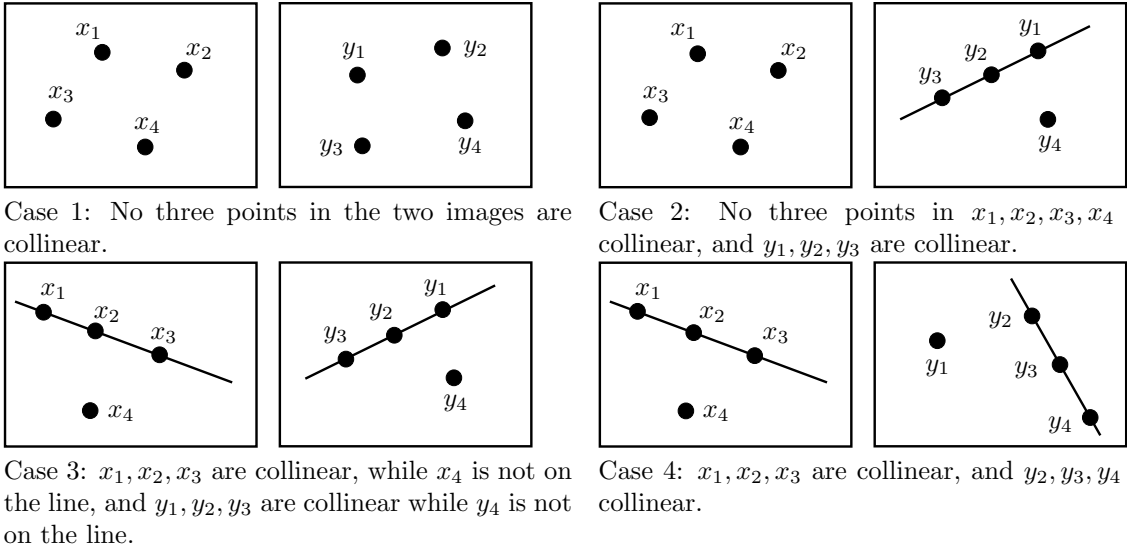

\centering
\begin{tabular}{p{0.5\textwidth}p{0.5\textwidth}}
\def\svgwidth{200pt} \input{noncollinear_pdf.tex} &
\def\svgwidth{200pt} \input{one-collinear_pdf.tex} \\
Case 1: No three points in the two images are collinear.  & 
Case 2: No three points in $x_1,x_2,x_3,x_4$ are collinear,
 and $y_1,y_2,y_3$ are collinear.\\
\def\svgwidth{200pt} \input{collinearA_pdf.tex} & 
\def\svgwidth{200pt} \input{collinearB_pdf.tex} \\
Case 3: $x_1,x_2,x_3$ are collinear, while $x_4$ is not on the line,
 and $y_1,y_2,y_3$ are collinear while $y_4$ is not on the line. & 
Case 4: $x_1,x_2,x_3$ are collinear, and $y_2,y_3,y_4$ are collinear.
\end{tabular}
\caption{The four point configurations (up to rearranging indices and swapping $x$ with $y$) for $m = 4$. Theorem~\ref{thm:rank4} proves the existence of an $E$ matrix for $m=4$ by treating each of these cases separately.}
\label{fig:m=4 E}
\end{figure*}

\begin{proof}
We divide the proof into several cases; see Figure~\ref{fig:m=4 E}. The first is the generic situation in which the $x_i$'s and $y_i$'s are in general position. In the remaining cases the input data satisfy special non-generic conditions. Together, these cases exhaust all possibilities, up to rearranging indices and swapping $x$ with $y$. 

In the first three cases, the proof proceeds by exhibiting an explicit matrix $H$ that satisfies the assumption of Lemma~\ref{lem:relation}.  Cases 1 and 2 are easy to check. 
The $H$ in case 3 is quite a bit more involved, although it only suffices to verify that it satisfies Lemma~\ref{lem:relation}, which is mechanical. The last case uses a different argument to construct an essential matrix associated to $Z$.

\begin{enumerate}

\item {\it No three of the $x_i$'s are collinear in $\RR^2$, and no three of the $y_i$'s are collinear in $\RR^2$;
see Figure~\ref{fig:m=4 E}.}

In this case, there is an invertible matrix $H\in \RR^{3\times 3}$ such that $y_i \sim Hx_i$ by Lemma~\ref{lem:non-collinear} with $n=2$. The conclusion now follows from 
Lemma~\ref{lem:relation}. \\

\item {\it No three points in $x_1,x_2,x_3,x_4$ are collinear in $\RR^2$, and  the points $y_1,y_2,y_3$ are collinear in $\RR^2$;
see Figure~\ref{fig:m=4 E}.}

By Lemma~\ref{lem:non-collinear}, we can choose an invertible matrix $H_1 \in \RR^{3\times 3}$ such that 
\begin{align*}
H_1 x_1  \sim (1,1,1)^\top, \ H_1 x_2 \sim (0,0,1)^\top, \  H_1 x_3 \sim (0,1,0)^\top \text{ and } H_1 x_4 \sim (1,0,0)^\top.
\end{align*}

On the other hand, by Lemma~\ref{lem:homography}, there is an invertible matrix $H_2 \in \RR^{3\times 3}$ such that 
\begin{align*}
H_2 y_1 =  (0,0,1)^\top, \  H_2 y_2 = (0,\alpha,1)^\top, \ H_2 y_3 = (0,\beta,1)^\top
\end{align*}
for some non-zero distinct real numbers $\alpha$ and $\beta$. Consider the rank two matrix 
\begin{align*} H_3:= \begin{pmatrix}
0 & 0 & 0 \\ 0 & -\alpha \beta & \alpha \beta \\ 0 & -\alpha &\beta
\end{pmatrix}.
\end{align*}
Then we obtain
\begin{align*}
& H_3 (1,1,1)^\top \sim H_2 y_1, \ H_3 (0,0,1)^\top \sim H_2 y_2, \\ &
 H_3 (0,1,0)^\top \sim H_2 y_3 \text{ and } H_3 (1,0,0)^\top = (0,0,0)^\top.
\end{align*}
Consequently, if we consider the rank two matrix $H:= H_2^{-1} H_3 H_1$, then $y_i \sim Hx_i$ for $i=1,2,3$ and $Hx_4 = 0$. Thus, the result follows from Lemma~\ref{lem:relation}.\\

\item
{\it The points $x_1,x_2,x_3$ are collinear in $\RR^2$ while $x_4$ is not on the line, and the points $y_1,y_2,y_3$ are collinear in $\RR^2$
while $y_4$ is not on the line; see Figure~\ref{fig:m=4 E}.}

Using Lemma~\ref{lem:homography}, 
by multiplying two invertible matrices from the left to $x_i$'s and $y_i$'s if necessary, we may assume 
$x_1 = (0,0)$, $x_2 = (0,\alpha)$, $x_3 = (0,\beta)$, $x_4 = (x_{41},x_{42})$, 
$y_1 = (0,0)$, $y_2 = (0,\gamma)$, $y_3 = (0,\delta)$ and $y_4 = (y_{41},y_{42})$, 
where $x_{41}, \alpha,\beta,\gamma,\delta$ are non-zero real numbers, $\alpha\neq \beta$ and $\gamma\neq \delta$.
Then there is a matrix $H\in \RR^{3\times 3}$ such that 
\begin{align*}
Hx_1 & = \alpha \beta x_{41} (\gamma-\delta) y_1\\
Hx_2 & = \alpha \delta x_{41}(\alpha-x_{41}) y_2\\
Hx_3 & = \beta \gamma x_{41} (\alpha-x_{41}) y_3\\
Hx_4 & = x_{41} [\beta\gamma(\alpha-x_{42}) - \alpha\delta (\beta-x_{42}) ]y_4, 
\end{align*}
given by 
$$
H := \begin{pmatrix} 
H_{11} & 0 & 0\\ H_{21} & H_{22} & 0 \\ 0 & H_{32} & H_{33}
\end{pmatrix}
$$
where
\begin{align*}
H_{11} & = (\alpha-x_{42})  \beta \gamma y_{41} - (\beta - x_{42}) \alpha \delta y_{41}\\
H_{21} & = - \alpha x_{42}\gamma\delta + \beta x_{42}\gamma\delta + \alpha\beta \gamma y_{42} - \beta x_{42}\gamma y_{42}- \alpha \beta \delta y_{42} + \alpha x_{42}\delta y_{42}\\
H_{22} & = (\alpha-\beta)x_{41}\gamma \delta\\
H_{32} & = (\alpha \delta - \beta \gamma)x_{41}\\
H_{33} & = (\gamma - \delta)x_{41}\alpha\beta.
\end{align*}
Notice that $H_{22}H_{33}\neq 0$, which implies $\rank(H)\geq 2$. Then, the result follows using Lemma~\ref{lem:relation}.\\

\item
{\it The points $x_1,x_2,x_3$ are collinear in $\RR^2$, and the points $y_2,y_3,y_4$ are collinear in $\RR^2$;
see Figure~\ref{fig:m=4 E}.}

Let $P_X$ be the plane in $\RR^3$ containing $(0,0,0)$ and the common line $l_X$ joining $x_1,x_2,x_3$.
Let $P_Y$ be the plane in $\RR^3$ containing $(0,0,0)$ and the common line joining $y_2,y_3,y_4$.
Take a unit vector $u\in P_X$ so that $u^\top x_1=0$. Let $U$ be the orthogonal matrix given by 
$$
U:= \lt( \f{x_1}{\|x_1\|}, \    u,   \ \f{x_1}{\|x_1\|}\times  u\rt)
$$
Let $w\in P_Y$ be a unit vector so that $w^\top y_4 = 0$.
We consider the orthogonal matrix 
$$W:=\lt( \f{y_4}{\|y_4\|}, \  w, \   \f{y_4}{\|y_4\|}\times w\rt).$$
Let $R$ be an orthogonal matrix so that $RU=W$, namely, $R := WU^\top$. 
Then, $R\f{x_1}{\|x_1\|}=\f{y_4}{\|y_4\|}$ and $Ru=w$. 
If $x\in l_X$, then $x = \alpha \f{x_1}{\|x_1\|}+ \beta u$ for some real numbers $\alpha,\beta$.
Thus we have
$$
Rx = \alpha R\f{x_1}{\|x_1\|}+\beta Ru
= \alpha  \f{y_4}{\|y_4\|}+\beta w\in  P_Y.
$$
Consider the essential matrix $E=[y_4]_\times R$. One has
\begin{align*}
y_4^\top E x_4 & = y_4^\top [y_4]_\times R x_4 = 0^\top R x_4= 0 \text{ and }\\
y_1^\top E x_1 & = y_1^\top [y_4]_\times R x_1 \sim y_1^\top [y_4]_\times y_4 = 0.
\end{align*}
For $i=2,3$, since $Rx_i\in P_Y = {\rm span}\{y_i,y_4\}$, one obtains
$$
y_i^\top Ex_i  \sim [y_i\times y_4]^\top Rx_i=0.
$$
Hence $E$ is an essential matrix of $Z$. 
\end{enumerate}
\end{proof}

\begin{corollary} \label{cor:exists E for m<=4}
An essential matrix always exists when $m \leq 4$ provided all the $x_i$'s are distinct and all the $y_i$'s are distinct. 
\end{corollary}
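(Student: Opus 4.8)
The plan is to assemble this corollary directly from the two existence results already in hand: the theorem established above for $m \le 3$ (which needs no genericity hypothesis whatsoever) and Theorem~\ref{thm:rank4} for $m = 4$ (which needs exactly the stated distinctness assumption). Before invoking these, I would first reduce an arbitrary instance to the range $1 \le m \le 4$ using the observation around~\eqref{eqn:rank Z is m}. Suppose we are given correspondences $(x_1,y_1),\dots,(x_m,y_m)$ with all $x_i$ pairwise distinct, all $y_i$ pairwise distinct, and $\rank(Z) \le 4$. Choose a subset $S \subseteq \{1,\dots,m\}$ of size $\rank(Z)$ indexing a maximal linearly independent set of rows of $Z$, and let $Z'$ be the corresponding submatrix. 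Then $\ker_\RR(Z') = \ker_\RR(Z)$, so any essential matrix for $Z'$ is an essential matrix for $Z$; moreover the subcollection $\{(x_i,y_i) : i \in S\}$ still has pairwise distinct $x$-coordinates and pairwise distinct $y$-coordinates, being a subcollection of one with this property. Hence it suffices to treat $m = \rank(Z) \in \{1,2,3,4\}$.

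With this reduction in place, the argument splits into two cases. If $m \le 3$, the preceding theorem already produces an essential matrix (choosing a rotation $R$ with $y_1 \sim R x_1$ and a translation direction $t$ orthogonal to the remaining $y_i \times R x_i$), and no distinctness is required. If $m = 4$, then the assumption that all $x_i$ are distinct and all $y_i$ are distinct is precisely the hypothesis of Theorem~\ref{thm:rank4}, so that theorem applies verbatim: it dispatches the four point configurations of Figure~\ref{fig:m=4 E} by exhibiting an explicit matrix $H$ of rank at least two satisfying Lemma~\ref{lem:relation} in Cases 1--3, and an explicit essential matrix $[y_4]_\times R$ in Case 4. Combining the two cases yields an essential matrix for every $m \le 4$ under the single hypothesis stated.

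There is essentially no obstacle here beyond bookkeeping, since all the genuine difficulty already resides in Theorem~\ref{thm:rank4}. The only two points worth a careful word are that the distinctness hypothesis survives the passage from $Z$ to the submatrix $Z'$ (immediate, as a subcollection of distinct points is distinct), and that the $m \le 3$ case truly needs no genericity, so that the case analysis above covers all of $m \le 4$ using exactly the hypothesis in the corollary and nothing stronger.
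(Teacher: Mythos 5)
Your proof is correct and matches the paper's intent: the corollary is stated there without a separate argument precisely because it is the immediate combination of the $m\leq 3$ theorem (no genericity needed) and Theorem~\ref{thm:rank4} for $m=4$, together with the standing reduction to $m=\rank(Z)$ from \eqref{eqn:rank Z is m}. Your added bookkeeping that the distinctness hypothesis passes to the subcollection indexing independent rows of $Z$ is exactly the right (and only) point to check.
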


\section{Discussion}
\label{sec:discussion}
In this paper, we have settled the existence problem for fundamental matrices for all values of $m$ and essential matrices for $m \le 4$ (equivalently, for all $m$ for which $\rank(Z) \leq 4$).
In doing so, we have shown that pure dimension counting arguments are not enough to reason about the existence of real valued epipolar matrices. 

As we mentioned in the previous section, the conditions for the existence of an essential matrix for $5 \le m \le 7$ appear to be difficult, and are unknown for $m=5,6$. 
For $m=7$, we did find a test for the existence of an essential matrix. This uses the classical theory of {\em Chow forms}~\cite{dalbec-sturmfels,gelfand-et-al}. We have not included it in this paper since we felt that it deserves further attention and can possibly be simplified.
The interested reader can find the details in \cite[Section 3.3]{epipolar_paper_v1}. Chow forms also provide a test for 
whether $\ker_\CC(Z) \cap \mathcal{E}_\CC \neq \emptyset$ when $m=6$ \cite[Section 3.1]{epipolar_paper_v1}. Again, we have left this out of the current paper since it does not answer the question of existence of a real essential matrix when $m=6$.

Even though our results are phrased in terms of the matrix $Z$, we have shown that they can be reinterpreted in terms of the input $X$ and $Y$ in most cases. We are curious about the set of six and seven point correspondences in two views for which there is no fundamental matrix. Theorems~\ref{thm:m=7 and kernel in rank one} and \ref{thm:m=6 and kernal in rank one} characterized the point configurations for which there is no fundamental matrix because $\ker_\RR(Z) \subseteq \mathcal{R}_1$. It would also be interesting to understand the configurations for which $\ker_\RR(Z)$ contains only matrices of ranks one and three 
as in Example~\ref{example:rank one and three}.

The results in this paper show that reasoning over real numbers is both a source of surprises and complications. We believe that similar surprises and complications lurk in other existence problems in multiview geometry and are worthy of study.

\appendix

\section{Proof of Lemma~\ref{lem:homography}} \label{app:homography}
Since $l-x_0$ and $m-y_0$ are lines in $\RR^2$ passing through the origin, one can choose an orthogonal matrix $W\in \RR^{2\times 2}$ such that 
$m-y_0 = W(l-x_0)$. It follows that
$$
m= W(l-x_0)+y_0 = Wl - Wx_0+y_0 = Wl +z
$$
where $z:= y_0-Wx_0$ is a point in $\RR^2$.
Then, for the $3\times 3$ matrix 
$H:= \lt(\begin{smallmatrix} W & z \\ 0 & 1\end{smallmatrix}\rt)$, one has 
$\lt(\begin{smallmatrix} m \\ 1 \end{smallmatrix}\rt) = H \lt(\begin{smallmatrix} l \\ 1 \end{smallmatrix}\rt)$ which verifies the statement \eqref{lem:homography2}. 
In addition, $H \lt(\begin{smallmatrix} x_0 \\ 1 \end{smallmatrix}\rt) = \lt(\begin{smallmatrix} y_0 \\ 1 \end{smallmatrix}\rt)$, and thus the assertion \eqref{lem:homography1} also holds. \qed

%
%
%
%
%
\section{Proof of Lemma~\ref{lemma:chum}}
\label{sec:chum}
Recall that a fundamental matrix can be written in the form
$
F = [b]_\times H
$
where $b$ is a nonzero vector in $\RR^3$ and $H\in \RR^{3\times 3}$ is an invertible matrix. Then the epipolar constraints can be rewritten as
\begin{align}
& y_i^\top F x_i = 0, \forall i = 1, \hdots, m. \nonumber\\
\iff &y_i^\top [b]_\times Hx_i = 0, \forall i = 1, \hdots, m. \nonumber\\
\iff &y_i^\top(b \times H x_i) = 0, \forall i = 1, \hdots, m. \label{eq:before}\\
\iff &b^\top(y_i \times H x_i) = 0, \forall i = 1, \hdots, m.\label{eq:after}\\
\iff &b^\top\begin{pmatrix} \cdots & y_i \times H x_i & \cdots \end{pmatrix} = 0. \nonumber
\end{align}
A non-zero $b$ exists in the expression for $F$  if and only if 
\begin{align}\label{eq:rank}
\rank \begin{pmatrix} \cdots & y_i \times H x_i & \cdots \end{pmatrix} < 3.
\end{align}
The equivalence of~\eqref{eq:before} and~\eqref{eq:after} follows from the fact that $p^\top (q \times r) = -q^\top(p \times r)$. The matrix in~\eqref{eq:rank} is of size $3 \times m$. A sufficient condition for it to have rank less than 3 is for $m-2$ or more columns to be equal to zero. This is the case if we take $H=A$ given in the assumption. \qed

\vspace{0.2cm}
The observation about the scalar triple product and the resulting rank constraint 
has also been used by Kneip {\em et al.}~\cite{kneip-et-al} but only in the calibrated case.

\section{Proof of Theorem~\ref{thm:m=7 and kernel in rank one}}
\label{sec:m=7 and kernel in rank one}
{\it ``If" part}: Suppose \eqref{thm:m=7 and kernel in rank one, part 1} holds and let $\tau$ be the set given in 
\eqref{thm:m=7 and kernel in rank one, part 1}.
Then there is a $u\in \PP_\RR^2$ such that $u^\top y_i=0$ for any $i\in \tau$. Let $x_k$ be the single element in the set 
$\{x_i\}_{i\notin \tau}$. Consider a basis $\{v_1,v_2\}\subseteq \PP_\RR^2$ of the orthogonal complement of $x_k$.
For $j=1,2$, define $A_j = uv_j^\top\in \PP_\RR^{3\times 3}$ and let $a_j\in \PP_\RR^8$ be its vectorization. Then $\{a_1,a_2\}$ is a linearly independent set spanning
a subset of $\mathcal{R}_1$. Moreover for any $i=1,\ldots,7$ and $j=1,2$, $y_i^\top A_j x_i = (y_i^\top u)(v_j^\top x_i)=0$. 
Hence $a_j\in \ker_\RR(Z)$ for $j=1,2$.
As $\rank(Z)=7$ (cf.~\eqref{eqn:rank Z is m}), ${\rm ker}_\RR(Z) ={\rm span}\{a_1,a_2\}\subseteq \mathcal{R}_1$.
The same idea of proof works if \eqref{thm:m=7 and kernel in rank one, part 2} holds.

{\it ``Only if" part}: Consider a basis $\{a_1,a_2\}\subseteq \PP_\RR^8$ of ${\rm ker}_\RR(Z)$, which is inside $\mathcal{R}_1$, and 
assume $a_j$ is the vectorization of $A_j\in \PP_\RR^{3\times 3}$ for $j=1,2$. For any $j$, $\rank(A_j)=1$, 
so $A_j = u_jv_j^\top$ for some $u_j,v_j\in \PP_\RR^2$. Since $\rank(A_1+A_2)=1$, a simple check shows that either $\{u_1,u_2\}$ or 
$\{v_1,v_2\}$ is linearly dependent. Thus, up to scaling, we may assume either $u_1=u_2$ or $v_1=v_2$. 
If $u_1=u_2$, then $\{v_1,v_2\}$ is linearly independent. In addition, 
$0=y_i^\top A_j x_i = (y_i^\top u)(v_j^\top x_i)$ for each $i=1,\ldots,6$, $j=1,2$. Thus, either $y_i^\top u=0$ or 
$x_i\in {\rm span}\{v_1,v_2\}^\perp$. Notice that ${\rm span}\{v_1,v_2\}^\perp$ is a singleton in $\PP_\RR^2$.
As $\rank(Z)=7$, by the paragraph after Lemma~\ref{lem:rank bound}, neither `` $y_i^\top u=0$ for all $i$" nor
``$x_i\in {\rm span}\{v_1,v_2\}^\perp$ for all $i$" can happen. Hence \eqref{thm:m=7 and kernel in rank one, part 1}
holds with the nonempty proper subset $\tau:= \{i \ : \  y_i^\top u = 0\}$ of $\{1,\ldots,7\}$. If $v_1=v_2$, by the same idea one sees that 
\eqref{thm:m=7 and kernel in rank one, part 2} holds. \qed

\section{Proof of Theorem~\ref{thm:m=6 and kernal in rank one}}
\label{sec:m=6 and kernal in rank one}

Recall that we are assuming that $Z$ has full row rank, i.e., $m=\rank(Z)=6$. By Lemma \ref{lem:rank bound}, this can only be true for $m=6$ if $x_i$ and $y_i$ are not simultaneously collinear, i.e. one of $X$ or $Y$ has to have full row rank.

{\em ``If" part}: If all points $y_i$ are collinear in $\RR^2$, then there is $u\in \PP_\RR^2$ such that $u^\top y_i=0$ for any $i=1,\ldots,6$. 
Let $e_1=(1,0,0)^\top$, $e_2 = (0,1,0)^\top$, $e_3 = (0,0,1)^\top$. Consider the $3\times 3$ matrices
$$
A_j = ue_j^\top \text{ for } j = 1,2,3
$$
and their vectorizations $a_j \in \PP_\RR^8$. Then, $\{a_1,a_2,a_3\}$ is a linearly independent set spanning a subset of $\mathcal{R}_1$.
Moreover, for any $i=1,\ldots,6$ and $j=1,2,3$, $y_i^\top A_j x_i = (y_i^\top u)(x^\top_i e_j)=0$. 
Hence $a_j\in \ker_\RR(Z)$. As $\rank(Z)=6$ (cf.~\eqref{eqn:rank Z is m}), ${\rm ker}_\RR(Z) = {\rm span}\{a_1,a_2,a_3\}\subseteq \mathcal{R}_1$. 
The same idea of proof works if all points $x_i$ are collinear in $\RR^2$.

{\em ``Only if" part}: Consider a basis $\{a_1,a_2,a_3\}\subseteq \PP_\RR^8$ of ${\rm ker}_\RR(Z)$, which is inside $\mathcal{R}_1$, and 
assume $a_j$ is the vectorization of $A_j\in \PP_\RR^{3\times 3}$ for $j=1,2,3$. Then, by Lemma~\ref{lem:meshulam} 
with $n=3$ and $r=1$, up to taking transpose of all $A_j$, there are nonzero vectors $u,v_1,v_2,v_3\in \PP_\RR^2$ such that 
$A_j = uv_j^\top$ for $j=1,2,3$. The vectors $v_j$ are linearly independent as $A_j$ are. 
Moreover 
$0=y_i^\top A_j x_i = (y_i^\top u)(x_i^\top v_j)$ for any $i=1,\ldots,6$, $j=1,2,3$. 
We fix $i\in \{1,\ldots,6\}$ and claim that $y_i^\top u=0$. Indeed, if $y_i^\top u \neq 0$, then $x_i^\top v_j= 0$ for any $j=1,2,3$. 
As vectors $v_j$ are linearly independent we have $x_i=0$. This is impossible because $x_i$ as a point in $\PP_\RR^2$ has nonzero third coordinate. Hence our claim is true and thus all points $y_i$ are collinear in $\RR^2$. 
If it is necessary to replace $A_j$ by $A_j^\top$, it follows that all points $x_i$ are collinear in $\RR^2$.   \qed

\section{Proof of Lemma~\ref{lem:bernd}} \label{app:lemma bernd}
We first consider the case when $L$ is a projective line, i.e., $$L = \{A\mu + B\eta \,:\, \mu,\eta \in \RR \}$$  for some  $A,B \in \RR^{3 \times 3}$, with $B$ invertible. Then $B^{-1}L = \{ B^{-1}A\mu + I\eta \,:\,\mu,\eta \in \RR \}$ is an isomorphic image of $L$ and contains a matrix of rank two if and only if $L$ does.
 Hence we can assume $L = \{ M \mu - I \eta \,:\, \mu,\eta \in \RR\}$ for some $M \in \RR^{3 \times 3}$. The homogeneous cubic polynomial $\det(M \mu - I\eta)$ is not identically zero on $L$.
 When dehomogenized by setting $\mu=1$, it is the characteristic polynomial of $M$. Hence the three roots of $\det(M \mu - I \eta)=0$ in $\PP^1$ are $(\mu_1,\eta_1) \sim (1,\lambda_1), (\mu_2,\eta_2) \sim (1,\lambda_2)$ and $(\mu_3,\eta_3) \sim (1,\lambda_3)$ where $\lambda_1, \lambda_2, \lambda_3$ are the eigenvalues of $M$. At least one of these roots is real since $\det(M\mu - I\eta)$ is a cubic. Suppose $(\mu_1,\eta_1)$ is real. If $\rank(M\mu_1-I\eta_1) = \rank(M-I\lambda_1) = 2$, then $L$ contains a rank two matrix.  Otherwise, $\rank(M-I\lambda_1) = 1$. Then $\lambda_1$ is a double eigenvalue of $M$ and hence equals one of $\lambda_2$ or $\lambda_3$. Suppose $\lambda_1 = \lambda_2$. This implies that $(\mu_3,\eta_3)$ is a real root as well. If it is different from $(\mu_1,\eta_1)$, then it is a simple real root. Hence, $\rank(M\mu_3-I\eta_3)=2$, and $L$ has a rank two matrix.
So suppose $(\mu_1,\eta_1) \sim (\mu_2, \eta_2) \sim (\mu_3,\eta_3) \sim (1,\lambda)$ where $\lambda$ is the unique eigenvalue of $M$.
In that case, $\det(M\mu - I\eta) = \alpha \cdot (\eta-\lambda \mu)^3$ for some constant $\alpha$.
This finishes the case $\dim(L)=1$.

Now suppose $\dim(L) \geq 2$. If $\det$ restricted to $L$ is not a power of a homogeneous linear polynomial, then there exists a projective line $L'$ in $L$ such that $\det$ restricted to $L'$ is also not the power of a homogeneous linear polynomial.
The projective line $L'$ contains a matrix of rank two by the above argument. \qed

\section{A proof for the existence of a fundamental matrix when $m \leq 4$} \label{appendix:fundamental}
\begin{theorem}
If $m\leq 4$, then $Z$ has a fundamental matrix.
\end{theorem}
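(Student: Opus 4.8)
The plan is to reduce the statement to a purely linear-algebraic fact and then exploit that, for $m\le 4$, the kernel $\ker_\RR(Z)$ is ``large''. Since we may assume $m=\rank(Z)\le 4$ by \eqref{eqn:rank Z is m}, the subspace $\ker_\RR(Z)\subseteq\RR^{9}\cong\RR^{3\times 3}$ has linear dimension $9-m\ge 5$. So it suffices to prove the following: \emph{every linear subspace $\mathcal L\subseteq\RR^{3\times 3}$ of dimension at least $5$ contains a matrix of rank exactly two.} Applying this to $\mathcal L=\ker_\RR(Z)$ then produces a fundamental matrix for $Z$.

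First I would argue by contradiction: suppose $\mathcal L$, with $\dim_\RR\mathcal L\ge 5$, contains no rank-two matrix. By Lemma~\ref{lem:flanders} with $m=n=3$ and $r=1$, a subspace of $\RR^{3\times 3}$ all of whose elements have rank $\le 1$ has dimension $\le 3$; since $\dim_\RR\mathcal L\ge 5$, the subspace $\mathcal L$ contains a matrix of rank $\ge 2$, hence (as there is no rank-two matrix) a matrix $A_0$ of rank three. Replacing $\mathcal L$ by $A_0^{-1}\mathcal L$, which is a rank-preserving linear isomorphism of $\RR^{3\times 3}$, we may assume $I\in\mathcal L$.

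Next I would invoke Lemma~\ref{lem:bernd}. The restriction of $\det$ to $\mathcal L$ cannot vanish identically (otherwise every matrix in $\mathcal L$ would be singular, hence of rank $\le 1$, contradicting $I\in\mathcal L$), and $\mathcal L$ is positive dimensional and contains the rank-three matrix $I$; so by Lemma~\ref{lem:bernd}, since $\mathcal L$ has no rank-two matrix, the restriction of $\det$ to $\mathcal L$ must be the cube of a real linear form $\ell$. Normalizing, $\ell(I)^3=\det I=1$, so $\ell(I)=1$. Let $\mathcal L_0:=\{A\in\mathcal L:\ell(A)=0\}$, a linear subspace of dimension $\ge\dim_\RR\mathcal L-1\ge 4$. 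For any $A\in\mathcal L_0$ and any $\mu\in\RR$ we get $\det(\mu I+A)=\ell(\mu I+A)^3=\mu^3$, so the characteristic polynomial of $A$ is $t^3$; thus every $A\in\mathcal L_0$ is nilpotent and in particular has $\rank(A)\le 2$. If some $A\in\mathcal L_0$ had rank exactly two, it would be the desired rank-two matrix in $\mathcal L$; otherwise every element of $\mathcal L_0$ has rank $\le 1$, and Lemma~\ref{lem:flanders} forces $\dim_\RR\mathcal L_0\le 3$, contradicting $\dim_\RR\mathcal L_0\ge 4$. This contradiction proves the claim.

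I do not expect a serious obstacle: the argument closes precisely because $\dim_\RR\mathcal L_0\ge 4$ strictly exceeds the Flanders bound of $3$. The only points requiring care are the reduction to $I\in\mathcal L$ and checking that $\det(\mu I+A)=\mu^3$ genuinely forces $A$ to be nilpotent. The truly delicate situation — when $\dim_\RR\mathcal L_0$ equals $3$, so that Meshulam's theorem (Lemma~\ref{lem:meshulam}) and the Matrix Determinant Lemma (Lemma~\ref{lem:matrix determinant}) are needed to derive a contradiction — arises only for linear dimension $4$, i.e.\ $m=5$, which is exactly why this short proof does not extend to $m=5$ and the more elaborate argument of Theorem~\ref{thm:F exists for m<=5} is required there.
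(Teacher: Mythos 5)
Your proof is correct, but it takes a genuinely different route from the paper's own proof of this appendix statement. The paper's argument is elementary and constructive: it stays with the point data, choosing indices $i,j$ with $x_i \not\sim x_j$ and $y_i \not\sim y_j$, building an invertible homography $H$ with $y_i \sim Hx_i$ and $y_j \sim Hx_j$, and then invoking the rank condition \eqref{eq:rank} from the proof of Lemma~\ref{lemma:chum} to obtain a fundamental matrix of the form $[b]_\times H$; in particular it never uses Lemma~\ref{lem:bernd}, which is why the paper can call it the ``simpler'' proof. You instead prove the abstract statement that every linear subspace of $\RR^{3\times 3}$ of (linear) dimension at least $5$ contains a rank-two matrix, using Lemma~\ref{lem:flanders} and Lemma~\ref{lem:bernd}; this is essentially the scheme of Theorem~\ref{thm:F exists for m<=5}, except that the extra dimension ($\dim \mathcal{L}_0 \geq 4$ against the Flanders bound of $3$) lets you dispense with Meshulam's theorem and the Matrix Determinant Lemma, which are exactly the tools the paper needs to close the tight $m=5$ case. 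All your steps check out: Flanders produces a rank-three element, the passage to $A_0^{-1}\mathcal{L}$ is rank-preserving, the contrapositive of Lemma~\ref{lem:bernd} forces $\det$ restricted to $\mathcal{L}$ to be $\ell^3$ for a real linear functional $\ell$ (the scalar multiple can be absorbed into $\ell$ over $\RR$), and $\det(\mu I + A) = \mu^3$ for $A \in \mathcal{L}_0$ indeed makes every such $A$ singular (nilpotency is more than you need, since $\det A = \ell(A)^3 = 0$ already gives rank at most two), so Flanders applied to $\mathcal{L}_0$ yields the contradiction; your closing remark also correctly identifies why the argument stalls at $m=5$. The trade-off: the paper's appendix proof is self-contained, exploits the geometry of the correspondences, and exhibits an explicit $F$ adapted to the data, while yours is shorter modulo Lemma~\ref{lem:bernd}, ignores the special structure of $\ker_\RR(Z)$ entirely, and makes transparent exactly where the dimension count becomes tight.
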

\begin{proof}
If $m\leq 3$, by adding point pairs if necessary we can assume $m=3$.
One can always construct an invertible matrix $H$ such that $y_1 \sim H x_1$ which implies that $y_1 \times H x_1 = 0$ and equation~\eqref{eq:rank} is satisfied.

Let us now consider the case $m = 4$.  Since $\rank(Z) = 4$, by Lemma~\ref{lem:rank bound}, $\rank(X) \ge 2$ and $\rank(Y) \ge 2$. 
If we can find two indices $i$ and $j$ such that the matrices $\begin{pmatrix} x_i & x_j\end{pmatrix}$ and $\begin{pmatrix} y_i & y_j\end{pmatrix}$ both have rank 2 then we can construct an invertible matrix $H$ such that $y_i \sim H x_i$ and $y_j \sim H x_j$ and that would be enough for~\eqref{eq:rank}.
Without loss of generality let us assume that the matrix $\begin{pmatrix} x_1 & x_2\end{pmatrix}$ is of rank 2, i.e., $x_1 \not\sim x_2$. If $\begin{pmatrix} y_1 & y_2\end{pmatrix}$ has rank 2 we are done. So let us assume that this is not the case and $y_2 \sim y_1$. Since $\rank(Y) \ge 2$, we can without loss of generality assume that $y_3 \not\sim y_1$. Since $x_1 \not\sim x_2$, either, $x_3 \not\sim x_1$ or $x_3 \not\sim x_2$.
In the former case, $i = 1, j = 3$ is the pair we want, otherwise $i = 2, j = 3$ is the pair we want. 
\end{proof}

\bibliographystyle{plain}
\bibliography{agpc}

\end{document}